\theoremstyle{plain}
\newtheorem{theorem}{Theorem}[section]
\newtheorem{proposition}[theorem]{Proposition}
\theoremstyle{definition}
\newtheorem{example}[theorem]{Example}
\newtheorem{assumption}[theorem]{Assumption}
\theoremstyle{remark}
\newtheorem{remark}[theorem]{Remark}
\def\thmt@innercounters{equation,theorem}
\definecolor{Nikola}{rgb}{0, 0.0, 1.0}
\definecolor{Johannes}{rgb}{0, 0.0, 1.0}
\DeclareSymbolFont{rsfs}{U}{rsfs}{m}{n}
\DeclareSymbolFontAlphabet{\mathscrsfs}{rsfs}
\newcommand{\C}{{\operatorname{C}}}
\newcommand{\K}{{\operatorname{K}}}
\newcommand{\cD}{\mathscrsfs{D}}
\newcommand{\bS}{\mathcal{S}}
\newcommand{\A}{\mathcal{A}}
\newcommand{\safe}{\operatorname{safe}}
\def\eqref#1{equation~\ref{#1}}
\def\Eqref#1{Equation~\ref{#1}}
\def\plaineqref#1{\ref{#1}}
\def\1{\bm{1}}
\DeclareMathAlphabet{\mathsfit}{\encodingdefault}{\sfdefault}{m}{sl}
\SetMathAlphabet{\mathsfit}{bold}{\encodingdefault}{\sfdefault}{bx}{n}
\newcommand{\R}{\mathbb{R}}
\newcommand{\KL}{D_{\mathrm{KL}}}
\DeclareMathOperator*{\argmax}{arg\,max}
\icmltitlerunning{Embedding Safety into RL}
\begin{document}

\twocolumn[
\icmltitle{Embedding Safety into RL: A New Take on Trust Region Methods}

\icmlsetsymbol{equal}{*}

\begin{icmlauthorlist}
\icmlauthor{Nikola Milosevic}{cbs,scads}
\icmlauthor{Johannes M\"uller}{aachen}
\icmlauthor{Nico Scherf}{cbs,scads}
\end{icmlauthorlist}

\icmlaffiliation{cbs}{Max Planck Institute for Human Cognitive and Brain Sciences, Leipzig}
\icmlaffiliation{scads}{Center for Scalable Data Analytics and Artificial Intelligence (ScaDS.AI), Dresden/Leipzig}
\icmlaffiliation{aachen}{Institut f\"{u}r Mathematik, Technische Universit\"{a}t Berlin, 10623 Berlin, Germany}

\icmlcorrespondingauthor{Nikola Milosevic}{nmilosevic@cbs.mpg.de}

\icmlkeywords{Machine Learning, 
RL, Safe RL, CMDP
}

\vskip 0.3in
]

\printAffiliationsAndNotice

\begin{abstract}
Reinforcement Learning (RL) agents can solve diverse tasks but often exhibit unsafe behavior. Constrained Markov Decision Processes (CMDPs) address this by enforcing safety constraints, yet existing methods either sacrifice reward maximization or allow unsafe training. We introduce Constrained Trust Region Policy Optimization (C-TRPO), which reshapes the policy space geometry to ensure trust regions contain only safe policies, guaranteeing constraint satisfaction throughout training. We analyze its theoretical properties and connections to TRPO, Natural Policy Gradients, and Constrained Policy Optimization. Experiments show that C-TRPO reduces constraint violations while maintaining competitive returns.
\end{abstract}

\section{Introduction}

Reinforcement Learning (RL) has emerged as a highly successful paradigm in machine learning for solving sequential decision and control problems, with policy gradient (PG) algorithms as a popular approach~\citep{williams1992simple, sutton1999policy, konda1999actor}. 
Policy gradients are especially appealing for high-dimensional continuous control because they can be easily extended to function approximation. 
Due to their flexibility and generality, there has been significant progress in enhancing PGs to work robustly with deep neural network-based approaches.
PG-based policy optimization methods
such as Trust Region Policy Optimization (TRPO)
and Proximal Policy Optimization (PPO)
are among the most widely used general-purpose reinforcement learning algorithms \citep{schulman2017trust, schulman2017proximalpolicyoptimizationalgorithms}.

While flexibility makes PGs popular among practitioners, it comes at a cost: the policy can explore any behavior during training, posing significant risks in real-world applications. Many methods have been proposed to improve the safety of policy gradients, often based on the Constrained Markov Decision Process (CMDP) framework. However, existing methods either struggle to minimize constraint violations during training or severely limit the agent's performance.

This work introduces a simple strategy to enhance constraint satisfaction in trust region-based safe policy optimization methods without compromising performance. 
We propose a novel family of policy divergences, inspired by barrier function methods in optimization and safe control, that modify the policy geometry to ensure that trust regions consist only of safe policies.
Our approach is motivated by the observation that TRPO and related methods base their trust region on the state-average Kullback-Leibler (KL) divergence. 
It can be derived as the Bregman divergence induced by the negative conditional entropy on the space of state-action occupancies~\cite{neu2017unified}.

The key insight of this work is that safer trust regions can be obtained by modifying this function to account for cost constraints. %
This leads to a provably safe trust region-based policy optimization algorithm that preserves TRPO's guarantees, while simplifying existing methods and reducing constraint violations during training, without sacrificing reward performance.

\paragraph{Related Work}
Classic solution methods for CMDPs rely on linear programming techniques, see  \cite{Altman1999ConstrainedMD}. 
However, 
they struggle to scale, making them unsuitable for high-dimensional or continuous control problems. 
While there are numerous scalable approaches to solving CMDPs, we focus on model-free, direct policy optimization methods. 
Model-based approaches \cite{berkenkamp2017safeModelBased, as2025actsafe}, 
are attractive due to their stability and safety guarantees,
but require learning a model, which is not always feasible, or imposes additional assumptions on the model space.

\emph{Lagrangian methods} are a widely adopted approach, where the optimization problem is reformulated as a weighted objective that balances rewards and penalties for constraint violations.
This is often motivated by Lagrangian duality, where the penalty coefficient is interpreted as the dual variable. Learning the coefficient with stochastic gradient descent presents a popular baseline~\citep{achiam2017constrained, ray2019benchmarking, chow2019lyapunovbasedsafepolicyoptimization, stooke2020responsivesafetyreinforcementlearning}. However, a naively tuned Lagrange multiplier may not work well in practice due to oscillations and overshoot. 
To address this issue, \cite{stooke2020responsivesafetyreinforcementlearning} uses PID control to tune the dual variable during training, which achieves less oscillation around the constraint and faster convergence to a feasible policy. 
While Lagrangian approaches are becoming increasingly popular, it is not entirely clear how to update the dual variables during training, which has attracted significant research interest, see e.g. ~\cite{sohrabi2024picontrollersupdatinglagrange}.

\emph{Penalty methods} such as IPO~\citep{liu2020ipo} and P3O~\citep{zhang2022penalizedproximalpolicyoptimization} propose weighted penalty-based policy optimization objectives, where the penalties are weighted against the reward objective using a weighting hyper-parameter instead of a learnable one. 
This simplifies the Lagrangian approach since the penalty coefficients don't have to be optimized during training, which results in improved stability.
More recently, the approach to use (smoothed) log-barriers \citep{usmanova2024log, zhang2024constrained, dey2024p2bpo} became more popular, which keeps the algorithm simple due to the penalty approach, but can offer certain constraint satisfaction guarantees, see e.g. \cite{ni2024safe}.
However, working with an explicit penalty
produces suboptimal policies w.r.t the original constrained MDP and thus introduces an additional error, which has to be controlled; see for example~\cite{geist2019theory, muller2024essentially} for treatments of the regularization error in the unconstrained case, and \cite{liu2020ipo} for an example of an optimization gap in safe policy optimization.
In contrast, combining trust region-based updates as in TRPO~\cite{schulman2017trust} with constrained optimization techniques does not change the objective function and the set of optimizers, and therefore does not introduce an additional error. 

\emph{Trust region methods} are closely related to our approach, in particular Constrained Policy Optimization (CPO; \cite{achiam2017constrained}), which extends TRPO by restricting updates to the intersection of the trust region and the safe policy set, ensuring safety during training.
While CPO guarantees constraint satisfaction in the infinite sample limit, in practice it tends to oscillate around the constraint boundary with high overshoot, because it relies on noisy cost advantage estimates, and because the constraint only enters the optimization problem when the iterate is close to the boundary of the safe policy set.
To address constraint satisfaction, projection-based CPO (PCPO; \cite{yang2020projectionbasedconstrainedpolicyoptimization}) projects updates into the safe policy space between updates, improving stability but further hindering reward maximization. 
Building on PCPO, \cite{zhang2020first} and \cite{yang2022constrained} also introduce projection-based approaches based on first-order updates. 

\paragraph{Rethinking Safe Trust Region Methods}

We adopt a 
trust region approach that constructs trust regions exclusively within the safe policy set, eliminating the need for projections or constrained optimization in the inner loop. 
Trust region methods retain TRPO's update guarantees for both reward and constraints but often underperform compared to barrier penalty methods in terms of constraint satisfaction. To address this, we replace the state-average KL-divergence with policy divergences that act as barrier functions, see \Cref{fig:idea}. 
This modification encourages updates of the resulting trust region method to move more parallel to the constraint surfaces rather than directly toward and thereby
improves constraint satisfaction, simplifies optimization, and achieves competitive returns by maintaining policies within the safe set for longer, see also Figures \ref{fig:pullfigure} and \ref{fig:CPOvsC-TRPO-conceptual} in the Appendix.

\begin{figure}[ht]
    \centering
    \includegraphics[width=0.45\linewidth]{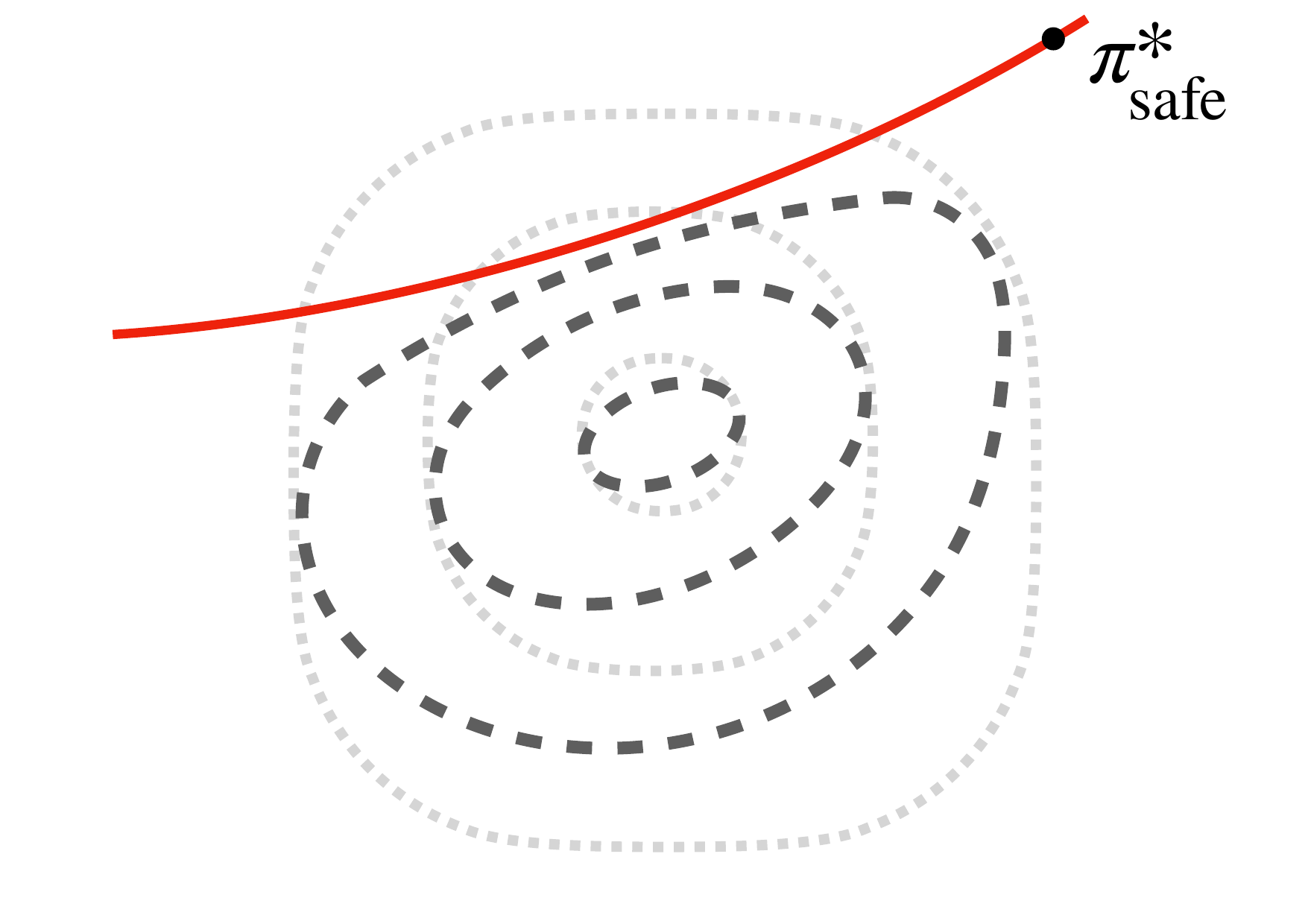}
    \includegraphics[width=0.45\linewidth]{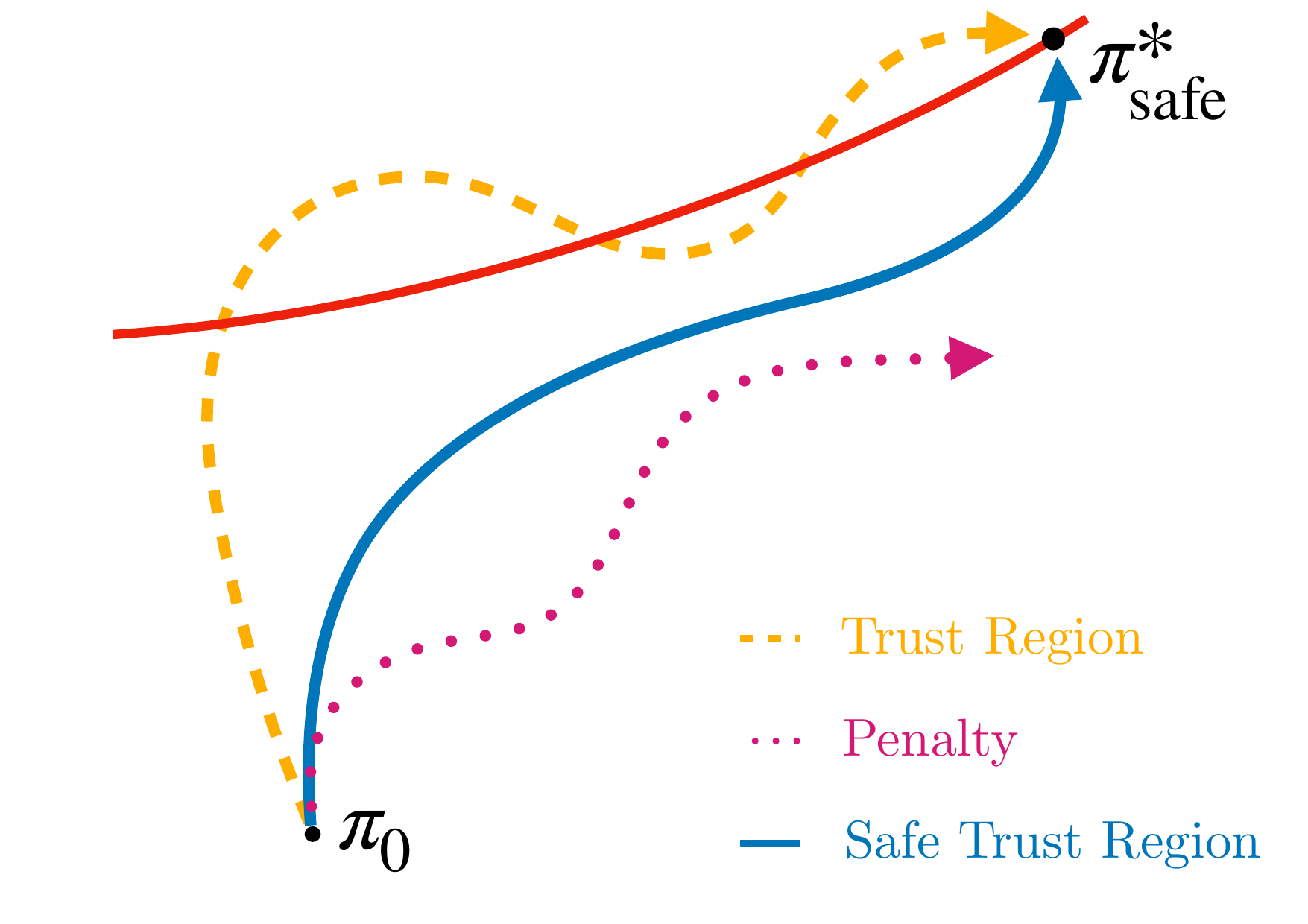}
    \caption{
    On the left, vanilla trust regions (dotted) and safe trust regions (dashed) are shown; 
    on the right, a schematic visualization of common failure modes in CMDPs is shown based on optimization trajectories; 
    here, vanilla trust regions can suffer from oscillations around the constraint, and penalty methods might introduce a bias. Lagrangian methods (not shown) can exhibit both issues.
    }
    \label{fig:idea}
\end{figure}

\paragraph{Contributions}
We summarize our contributions: 
\begin{itemize}
    \item In \Cref{sec:c-npg}, we introduce a modified policy divergence such that every trust region consists of only safe policies. 
    We introduce an idealized TRPO update based on the modified divergence, a tractable optimization algorithm for deep function approximation (C-TRPO), and a corresponding natural gradient method (C-NPG).
    \item We provide an efficient implementation of the proposed approximate C-TRPO method, see \Cref{subsec:implementation}, which comes with a minimal overhead compared to TRPO (up to the estimation of the expected cost) and no overhead compared to CPO.
    We demonstrate experimentally that C-TRPO yields competitive returns with smaller constraint violations compared to common safe policy optimization algorithms, see~\Cref{sec:experiments}. 
    \item In \Cref{sec:analysis-C-TRPO}, we introduce C-TRPO's improvement guarantees and contrast to TRPO and CPO. Further, we show that C-NPG is the continuous time limit of C-TRPO and provides global convergence guarantees towards the optimal safe policy; 
    this is in contrast to penalization or barrier methods, which introduce a bias. 
\end{itemize}

\section{Background}
We consider the infinite-horizon discounted constrained Markov decision process (CMDP) and refer the reader to \cite{Altman1999ConstrainedMD} for a general treatment. 
The CMDP is given by the tuple $(\bS, \A, P, r, \mu, \gamma, \mathcal{C})$, where $\bS$ and $\A$ are the finite state-space and action-space respectively and we refer to~\Cref{app:subsec:beyond} for a discussion of continuous state and action spaces. 
Further, $P\colon \bS \times \A \rightarrow \Delta_\bS$ is the transition kernel, $r\colon \bS \times \A\rightarrow \mathbb{R}$ is the reward function, $\mu \in \Delta_\bS$ is the initial state distribution at time $t=0$, and $\gamma\in [0, 1)$ is the discount factor.
The space $\Delta_\bS$ is the set of categorical distributions over $\bS$. Further, define the constraint set $\mathcal{C}=\{(c_i,b_i)\}_{i=1}^m$, where $c_i\colon\bS\times\A\to\mathbb{R}$ are the cost functions and $b_i \in \mathbb{R}$ are the cost thresholds.

An agent interacts with the CMDP by selecting a policy $\pi\in\Pi$ from the set of all Markov policies, i.e. an element from the Cartesian product of $|\mathcal{S}|$ probability simplicies on $\mathcal{A}$. Given such a policy $\pi$, the value functions $V_r^\pi, V_{c_i}^\pi\colon \bS \to \mathbb{R}$, action-value functions $Q_r^\pi, Q_{c_i}^\pi\colon \bS\times\A \to \mathbb{R}$, and advantage functions $A_r^\pi, A_c^\pi\colon \bS \times \A \to \mathbb{R}$ associated with the reward $r$ and the $i$-th cost $c_i$ are defined as
\begin{equation*}
    V_f^\pi(s) \coloneqq (1-\gamma)\,\mathbb{E}_\pi \left[ \sum_{t=0}^{\infty} \gamma^t f(s_t, a_t) \Big|  s_0 = s \right],    
\end{equation*}
where the function $f$ is either $r$ or $c_i$, and the expectations are taken over trajectories of the Markov process, meaning with respect to the initial distribution $s_0\sim\mu$, the policy $a_{t}\sim\pi(\cdot|s_t)$ and the state transition $s_{t+1} \sim P(\cdot|s_t, a_t)$.

Analogously, we set 
\begin{equation*}
    Q_f^\pi(s, a) \coloneqq (1-\gamma)\,\mathbb{E}_\pi \left[ \sum_{t=0}^{\infty} \gamma^t f(s_t, a_t) \Big| s_0 = s, a_0 = a \right]
\end{equation*}
and
$A_f^\pi(s,a) \coloneqq Q_f^\pi(s, a) - V_f^\pi(s)$. 
Constrained Markov decision processes address the optimization problem  
\begin{equation}\label{eq:CMDP}
    \text{maximize}_{\pi \in \Pi} \; V_r^\pi(\mu) \quad \text{subject to} \quad V_{c_i}^\pi(\mu) \leq b_i 
\end{equation}
for all $i =1, \dots, m$,
where $V_f^\pi(\mu)$ are the expected values under the initial state distribution
$
V_f^\pi(\mu) \coloneqq  \mathbb{E}_{s \sim \mu}[V_f^\pi(s)].
$
We will also write $V_f^\pi = V_f^\pi(\mu)$, and omit the explicit dependence on $\mu$ for convenience, and we write $V_f(\pi)$ when we want to emphasize its dependence on $\pi$.
We denote the set of safe policies by 
$\Pi_{\operatorname{safe}} = \bigcap_{i=1}^m \{ \pi 
:  V_{c_i}(\pi) \leq b_i \}$ 
and always assume that it is nontrivial. 

\paragraph{Cost Regret}
Depending on the task at hand, it is mandatory to solve the constrained optimization problem~\Eqref{eq:CMDP} in a safe way, meaning with a method that respects the constraints during optimization. 
This motivates the use of the  \emph{cost regret}
\begin{equation}\label{eq:cumcost}
    \textsc{CostReg}_+(\bm\pi, K, \mathcal{C}) \coloneqq  \sum_{i=0}^m \sum_{k=0}^{K-1} \left[V_{c_i}^{\pi_k} - b_i \right]_+,
\end{equation}
where $\left[ x \right]_+= \max\{0,x\}$, $\bm\pi=(\pi_0, \pi_1, ...\pi_K)$, and $K$ is the number of training iterations. The cost regret represents the cumulative sum of the expected constraint violations throughout training. A similar metric has been used in related online optimization settings, see \cite{efroni2020explorationexploitationconstrainedmdps, müller2024trulynoregretlearningconstrained}.
It is our goal to design a method that produces solutions of~(\plaineqref{eq:CMDP}) of similar quality compared to existing method, while achieving minimal cost regret. 

\paragraph{The Dual Linear Program for CMDPs}

Any stationary policy $\pi$ induces a discounted state-action (occupancy) measure $d_\pi\in\Delta_{\mathcal S\times\mathcal A}$, indicating the relative frequencies of visiting a state-action pair, discounted by how far the event lies in the future.
This probability measure is defined as
\begin{equation}
    d_\pi(s,a) \coloneqq (1-\gamma)
    \sum_{t=0}^\infty \gamma^t \mathbb{P}_\pi (s_t = s) \pi(a|s),
\end{equation}
where $\mathbb{P}_\pi (s_t = s)$ is the probability of observing the environment in state $s$ at time $t$ given the agent follows policy $\pi$.
For finite MDPs, it is well-known that maximizing the expected discounted return can be expressed as the linear program
    $$\operatorname{maximize}_d\, r^\top d\quad \text{subject to } 
    d\in \cD,$$
where $\cD$ is the set of feasible state-action measures, which form a polytope~\citep{kallenberg1994survey}. 
Analogously to an MDP, the discounted cost CMDP can be expressed as the linear program
\begin{align}\label{eq:constrained-LP}
    \begin{split}
        \text{maximize}_d \; r^\top d 
    \quad \text{subject to } d\in \cD_{\operatorname{safe}}, 
    \end{split}
\end{align}
where
    $\cD_{\operatorname{safe}} =  \bigcap_{i=1}^m \left\{d 
    :c_i^\top d \leq b_i \right\}\cap \cD$
is the safe occupancy set, see \Cref{fig:c-npg-problem} in Appendix \ref{appendix:extended-background}.

\paragraph{Information Geometry of Policy Optimization}
Among the most successful policy optimization schemes are natural policy gradient (NPG) methods or variants thereof, such as trust-region and proximal policy optimization (TRPO and PPO, respectively). 
These methods assume a convex geometry and corresponding Bregman divergences in the state-action polytope, see~\cite{neu2017unified, muller2023geometry} for more detailed discussions. 
A general trust region update is defined as 
\begin{align}\label{eq:TRPO}
    \pi_{k+1} \in \argmax_{\pi \in \Pi 
    } 
    \mathbb{A}_r^{\pi_k}(\pi) 
    \quad \text{ sbj. to } 
    D_\Phi(d_{\pi_k}||d_{\pi}) \le \delta
    ,
\end{align}
where $D_\Phi\colon\cD\times\cD\to\mathbb R$ is the Bregman divergence induced by a convex $\Phi\colon \operatorname{int}(\cD) \rightarrow \mathbb{R}$,
and
\begin{align}\label{eq:surrogate}
    \mathbb{A}_r^{\pi_k}(\pi) = \mathbb{E}_{s,a\sim d_{\pi_k}}\left[\frac{\pi(a|s)}{\pi_k(a|s)}A_r^{\pi_k}(s,a)\right],
\end{align}
is called the \emph{policy advantage} or \emph{surrogate advantage}. We can interpret $\mathbb{A}$ as a surrogate optimization objective for the expected return. In particular, for a parameterized policy $\pi_\theta$, it holds that $\nabla_\theta \mathbb{A}_{r,\pi_{\theta_k}}(\pi_\theta)|_{\theta=\theta_k} = \nabla_\theta V_r(\theta_k)$, see \cite{kakade2002Approximately, schulman2017trust}. 

TRPO and the original NPG assume the same policy geometry~\citep{kakade2001natural,schulman2017trust}, since they employ an identical Bregman divergence
\begin{align*}
    D_{\operatorname{K}}(d_{\pi_1}||d_{\pi_2}) & \coloneqq 
    \sum_{s} d_{\pi_1}(s) D_{\operatorname{KL}}(\pi_1(\cdot|s)||\pi_2(\cdot|s)).
\end{align*}
We refer to Appendix \ref{appendix:extended-background} for details on Bregman divergences.
We call $D_{\operatorname{K}}$ the \emph{Kakade divergence} and informally write $D_{\operatorname{K}}(\pi_1, \pi_2)\coloneqq D_{\operatorname{K}}(d_{\pi_1}, d_{\pi_2})$.
This divergence can be shown to be the Bregman divergence induced by the negative conditional entropy
\begin{equation}\label{eq:definition-conditional-entropy}
    \Phi_{\operatorname{K}}(d_\pi) \coloneqq \sum_{s,a} d_\pi(s,a) \log \pi(a|s),
\end{equation}
see~\cite{neu2017unified}.
It is well known that with a parameterized policy $\pi_\theta$, a linear approximation of $\mathbb{A}$ and a quadratic approximation of the Bregman divergence $D_{\operatorname{K}}$ at $\theta_k$, one obtains the \emph{natural policy gradient} step given by 
\begin{align}\label{eq:NPG}
    \theta_{k+1} = \theta_k + \epsilon_k G_{\operatorname{K}}(\theta_k)^+ \nabla_\theta V_r(\pi_{\theta_k}),
\end{align}
where $G_{\operatorname{K}}(\theta)^+$ denotes a pseudo-inverse of the generalized Fisher-information matrix of the policy with entries given by $G_\K(\theta)_{ij}=\partial_{\theta_i}d_\theta \nabla^2\Phi_\K(d_\theta)\partial_{\theta_j}d_\theta$, 
see~\cite{schulman2017trust,muller2023geometry} and \Cref{appendix:extended-background} for more detailed discussions.

\section{A Safe Geometry for Constrained MDPs} \label{sec:c-npg}

To prevent the policy iterates from violating the constraints during optimization, we construct policy divergences for which the trust regions are contained in the safe policy set.

\subsection{Safe Trust Regions} 
A Bregman divergence is induced by a mirror function that dictates the behavior of the divergence, see \Cref{appendix:extended-background}. Take for example the mirror function for TRPO and NPG in \Cref{eq:definition-conditional-entropy}. The divergence is defined when both policies are in the interior of $\cD$, and as either one of the policies approaches the boundary of the state-action polytope, the divergence approaches infinity. Hence, TRPO and NPG don't allow their policy iterates to become entirely deterministic during optimization.

Since the behavior of a Bregman divergences is dictated by the shape of its mirror function, we first construct a family of \emph{safe mirror functions}, that induce policy divergences that are finite only in the safe occupancy set $\cD_{\operatorname{safe}}$ instead of the entire state-action polytope $\cD$. 
Safe policy divergences, in turn, let us derive safe trust region and natural policy gradient methods. 

To this end, we consider mirror functions of the form
\begin{align}
    \Phi_{\operatorname{C}}(d) \coloneqq
     \Phi_{\operatorname{K}}(d)  + \sum_{i=1}^m \beta_i \phi(b_i-c_i^\top d), 
\end{align}
where $\Phi_{\operatorname{K}}$ is the conditional entropy defined in~\Cref{eq:definition-conditional-entropy}, and $\phi\colon\mathbb R_{>0}\to\mathbb R$ is a convex function with
$\phi'(x)\to+\infty$ for $x\searrow0$. 
This ensures that $\Phi_\C\colon\operatorname{int}(\cD_{\textup{safe}})\to\mathbb R$ is strictly convex and has infinite curvature at the cost surface $b_i-c_i^\top d = 0$, which means $\lVert \nabla \Phi_\C(d_k) \rVert \to +\infty$, when $b_i-c_i^\top d_k \searrow 0$. 
Possible candidates for $\phi$ are $\phi(x)=-\log(x)$ and $\phi(x) = x\log(x)$ corresponding to a logarithmic barrier and entropy, respectively. 

The mirror function $\Phi_{\operatorname{C}}$ induces the \emph{Constrained KL-Divergence} given by 
\begin{equation}\label{eq:weighted-kl}
        D_{\C}(d_{1}||d_{2})  = D_{\operatorname{K}}(d_{1}||d_2)
        + \sum_{i=1}^m \beta_i D_{\phi_i}(d_{1}||d_{2}),
\end{equation}
where
\begin{align}\label{eq:div-definition}
    \begin{split}
        D_{\phi_i}(d_1||d_2) = 
     & \, \phi(b_i - V_{c_i}(\pi_1)) - \phi(b_i - V_{c_i}(\pi_2)) \\ & 
     + \phi'(b_i-V_{c_i}(\pi_2))(V_{c_i}(\pi_1) - V_{c_i}(\pi_2)).
    \end{split}
\end{align}
The corresponding trust-region scheme is 
\begin{align}\label{eq:Safe-TRPO}
    \pi_{k+1} \in \argmax_{\pi \in \Pi
    }
    \mathbb{A}_r^{\pi_k}(\pi)
    \quad \text{ sbj. to } 
    D_\C(d_{\pi_k}||d_{\pi}) \le \delta
    ,
\end{align}
where $\mathbb{A}_r$ is defined in~\Cref{eq:surrogate}. 
Note the constraint is only satisfied if  $d_1, d_2\in\operatorname{int}(\cD_{\textup{safe}})$
and the divergence approaches $+\infty$ as $d_2$ approaches the boundary of the safe set.
Thus, the trust region $\{d\in \cD : D_\C(d_k||d)\le\delta\}$ is contained in the set of safe occupancy measures for any finite $\delta$.
Analogously to the case of unconstrained TRPO the corresponding natural policy gradient scheme is 
\begin{align}\label{eq:NPG-safe}
    \theta_{k+1} = \theta_k + \epsilon_k G_\C(\theta_k)^+ \nabla V_r(\theta_k),
\end{align}
where $G_\C(\theta)^+$ denotes an arbitrary pseudo-inverse and
\begin{align*}
    G_\C(\theta)_{ij} = \partial_{\theta_i}d_\theta^\top \nabla^2 \Phi_\C(d_\theta) \partial_{\theta_j} d_\theta. 
\end{align*}

\subsection{Constrained Trust Region Policy Optimization}\label{subsec:implementation}

If we could solve the optimization problem in \Cref{eq:Safe-TRPO} exactly, we would obtain a provably safe trust region policy optimization method with zero constraint violations, as long as we start with a safe policy. 
However, the exact trust region update~\Cref{eq:Safe-TRPO} cannot be computed. Firstly, the divergence depends on expected cost values, which we can only estimate. 
The resulting estimation errors of the divergence might cause the policy iterates to leave the safe set, in which case the divergence becomes ill-defined. 
Further, the divergence also depends on the expected cost value of the proposal policy, which is not available during the updates.
To address these issues, we propose an update based on a \emph{surrogate divergence}, similar to how surrogate objectives are used in policy optimization. We propose the following update, which we call \emph{Constrained TRPO} (C-TRPO).
\begin{equation}\label{eq:approx-c-trpo}
\pi_{k+1} = \argmax_{\pi\in\Pi} \mathbb{A}_r^{\pi_k}(\pi)\quad \text{sbj. to } 
    \bar D_\C(\pi || \pi_{k}) \le \delta.
\end{equation}
Here, $\bar D_\C$ is a surrogate for $D_\C$, which we define in~\Eqref{eq:def-surrogate-KL} and \Eqref{eq:ful-d-phi}. 
\Cref{algo:cpg} shows the implementation of C-TRPO, which performs a constrained trust region update if the current policy is safe or a recovery step that minimizes the cost if the policy is unsafe.
For the trust region update, we follow a similar implementation to the original TRPO, estimating the divergence, using a linear approximation of the surrogate objective, and a quadratic approximation of the trust region.

\paragraph{Surrogate Divergence} 
For the sake of clarity, we first focus on the case with a single constraint, but the results are easily extended to multiple constraints by summation of the individual constraint terms, as discussed in the respective paragraph below. 
In practice, the exact constrained KL-Divergence $D_{\operatorname{C}}$ cannot be evaluated, because it depends on the cost-return of the optimized policy $V_{c}(\pi)$. 
However, we can approximate it locally around the policy of the $k$-th iteration, $\pi_k$, using a surrogate divergence. This surrogate can be expressed as a function of the policy cost advantage
\begin{equation}
    \mathbb{A}_{c}^{\pi_k}(\pi) = \mathbb{E}_{d_{\pi_k}} \left[\frac{\pi(a|s)}{\pi_k(a|s)}A_c^{\pi_k}(s,a)\right],
\end{equation}
which approximates $V_{c}(\pi) - V_{c}^{\pi_k}$ up to first order in the policy parameters \citep{kakade2002Approximately, schulman2017trust, achiam2017constrained}.
Assume $\pi_k\in\Pi_{\safe}$ and define the \emph{constraint margin} $\delta_b = b - V_c^{\pi_k}$, which is positive if $\pi_k\in\Pi_{\textsc{safe}}$. 
Further, define the surrogate divergence
    $\bar D_\C(\pi || \pi_{k}) = \bar D_\mathrm{KL}(\pi || \pi_{k}) + \beta \bar D_\phi(\pi || \pi_{k})$, 
where
\begin{equation}\label{eq:def-surrogate-KL}
        \bar D_\mathrm{KL}(\pi || \pi_{k}) = \sum_{s\in\bS} d_{\pi_k}(s) D_\mathrm{KL}(\pi || \pi_{k})
\end{equation}
and
\begin{equation}\label{eq:ful-d-phi}
\bar D_\phi(\pi_{\theta} || \pi_{\theta_k}) =
    \begin{cases}
         \Psi(\mathbb{A}_c^{\pi_k}), & \text{ if } \, \delta_b - \mathbb{A}_{c}^{\pi_k} \in \mathrm{dom}(\phi)  \\
         + \infty & \text{ otherwise} 
    \end{cases}
\end{equation}
where
\begin{equation}
    \Psi(\mathbb{A}_c^{\pi_k}) = \phi(\delta_b - \mathbb{A}_{c}^{\pi_k}(\pi)) - \phi(\delta_b) + \phi'(\delta_b)\mathbb{A}_{c}^{\pi_k}(\pi).
\end{equation}
The surrogate $\bar D_\phi$ is closely related to the Bregman divergence $D_\phi$. %
They are equivalent up to the substitution $V_c(\pi) - V_c(\pi_{k}) \to \mathbb{A}_c^{\pi_k}(\pi)$, see Appendix \ref{appendix:divergence-derivations}. The surrogate can be estimated from samples of the CMDP, where in the practical implementation, $\delta_b$ and the policy cost advantage are estimated from trajectory samples
using GAE-$\lambda$~\cite{schulman2018highdimensionalcontinuouscontrolusing}.
The consequences of the substitution in the surrogate will be discussed in \Cref{sec:analysis-C-TRPO}.

\paragraph{Comparison with CPO} This approach is similar to the update in CPO~\cite{achiam2017constrained}, but 
incorporates the constraint into the design of the trust region, with an influence controlled by the parameter $\beta$. This yields more conservative updates within the safe set without introducing bias in the optimal solution. Additionally, it simplifies the inner-loop constrained optimization: C-TRPO approximates a single quadratic constraint, rather than solving for the intersection of a quadratic and a linear constraint as in CPO, see also Appendix \ref{app:cpo-comp}.

\paragraph{Multiple Constraints} C-TRPO naturally extents to multiple constraints, by introducing the divergence $\bar D^{\textup{mult}}_\C(\pi || \pi_{k}) = \bar D_\mathrm{KL}(\pi || \pi_{k}) + \sum_i \beta_i \bar D_{\phi_i}(\pi || \pi_{k})$,
where each $\bar D_{\phi_i}$ is defined according to Eq. \ref{eq:ful-d-phi} but with the respective $c_i$. In section \ref{sec:c-npg}, we discuss that this divergence approximates a natural policy gradient (C-NPG) on the safe state-action occupancy set, where \Cref{thm:convergence} implies that the optimal feasible solution $\pi^\star_{\operatorname{safe}}$ satisfies as few constraints with equality as required to be optimal.

\paragraph{Recovery with Hysteresis}
The iterate may still leave the safe policy set $\Pi_{\safe}$, either due to approximation errors of the divergence, or because we started outside the safe set. In this case, we perform a recovery step, where we only minimize the cost with TRPO as by~\cite{achiam2017constrained}.
In tasks where the policy starts in the unsafe set, C-TRPO can get stuck at the constraint surface. This is easily mitigated by including a hysteresis condition for returning to the safe set. If $\pi_{k-1}$ is the previous policy, then $\pi_k \in \Pi_\textup{safe}^{\textsc{H}}$ with
     $\Pi_\textup{safe}^{\textsc{H}} = \{ \pi_\theta \in \Pi_\theta \text{ and } V_c(\pi_\theta) \leq b_{\textsc{H}} \}$ 
where $b_{\textsc{H}} = b$ if $\pi_{k-1} \in \Pi_\textup{safe}^{\textsc{H}}$ and a user-specified fraction of $b$ otherwise.

\paragraph{Computational Complexity}
The C-TRPO implementation adds no computational overhead compared to CPO, since $\bar D_\phi$ is a function of the cost advantage estimate, and is added to the divergence of TRPO. 
Compared to TRPO, the cost value function must be approximated. %

\begin{algorithm}
\caption{Constrained TRPO (C-TRPO); differences from TRPO {\color{blue}\bfseries in blue}}\label{algo:cpg}
\begin{algorithmic}[1]
\STATE \textbf{Input:} Initial policy $\pi_0 \in \Pi_\theta$, safety parameter $\beta > 0$, recovery 
parameter $0 < b_\textsc{H} \leq b$ 
\FOR{$k = 0, 1, 2, \ldots$}
    \STATE Sample a set of trajectories following $\pi_k = \pi_{\theta_k}$
    \IF{$\pi_k \in \Pi^{\textsc{H}}_{\safe}$}
        \STATE $A \leftarrow A_{r}$; $D \leftarrow \bar D_{\operatorname{C}} = \bar D_{\operatorname{KL}} \color{blue}{ + \beta \bar D_{\operatorname{\phi}}}$ \COMMENT{Constrained trust region update}
        \ELSE
        \STATE $\color{blue}{ A \leftarrow -A_{c}}$; $D \leftarrow \bar D_{\operatorname{KL}}$ \COMMENT{Recovery}
    \ENDIF
    \STATE Compute 
    $\pi_{k+1}$ using TRPO with $A$ as advantage estimate and with $D$ as policy divergence.
\ENDFOR
\end{algorithmic}
\end{algorithm} 

\subsection{Constrained Natural Policy Gradient}

Practically, the C-TRPO optimization problem in \Cref{eq:approx-c-trpo} is solved like traditional TRPO: the objective is approximated linearly, and the constraint is approximated quadratically in the policy parameters using automatic differentiation and the conjugate gradient method. 
This leads to the policy parameter update 
\begin{equation}
    \theta_{k+1} = \theta_k + \alpha^i \sqrt{\frac{2\delta}{g^\top_k H^{-1}_k g_k}} \cdot H_k^{-1} g_k, 
\end{equation}
where 
\begin{align}
       g_k = \nabla_\theta \mathbb{A}_c^{\theta_k}(\pi_\theta)|_{\theta=\theta_k}
\end{align}
and
\begin{align}
    H_k = \bar H_\C(\theta_k) = \nabla^2_\theta \bar D_C(\pi_\theta||\pi_{\theta_k})|_{\theta=\theta_k}
\end{align}
are finite sample estimates, and $H^{-1} g$ is approximated using conjugate gradients. 
The $\alpha^i \in [0,1]$ are the coefficients for backtracking line search, which ensures $\bar D_\C(\pi_\theta||\pi_{\theta_k})\leq\delta$.

We show in \Cref{appendix:divergence-quadratic} that the Hessian
\begin{align*}
    \bar H_\C(\theta_k) & = G_\K(\theta_k) + \beta\phi''(b-V_c^{\hat\theta}(\theta)) \nabla_\theta V_c^{\hat\theta}(\theta) \nabla_\theta V_c^{\hat\theta}(\theta)^\top,
\end{align*}
is equivalent to the Gramian $G_\C(\theta_k)$ of the natural gradient update in \Cref{eq:NPG-safe}. %
We call the resulting policy gradient
\begin{align}\label{eq:NPG-safe-h}
    \theta_{k+1} = \theta_k + \epsilon_k \bar H_\C(\theta_k)^+ \nabla V_r(\theta_k),
\end{align}
the \emph{Constrained NPG} (C-NPG).
In particular, this shows that the C-TRPO update can be interpreted as a natural policy gradient step with an adaptive step size, see \Cref{appendix:extended-background}.
We emphasize that the idealized safe trust region update in \Cref{eq:Safe-TRPO} and the C-TRPO update of \Cref{eq:approx-c-trpo} agree up to second order in the policy parameters.
This justifies the surrogate divergence in C-TRPO and motivates the discussion of the C-NPG flow in \Cref{sec:s-npg-analysis}.
We show in \Cref{thm:invariance}
that $\operatorname{int}(\cD_{\textup{safe}})$ is invariant under the dynamics of the C-NPG.
This implies that if the trust region radius $\delta$ is small, and the advantage estimation is accurate enough, the iterates under C-TRPO  never leave the safe set.

\section{Analysis} \label{sec:analysis-C-TRPO}

Here, we provide a theoretical analysis of the updates of C-TRPO and study the convergence properties of the time-continuous version of C-NPG. 
All proofs are deferred to the \Cref{app:sec:proofs-theory}. 

\subsection{Properties of the C-TRPO Update}\label{subsec:analysis}
The practical C-TRPO algorithm is implemented using the surrogate divergence introduced in \Cref{eq:approx-c-trpo}, which is identical to the theoretical divergence $D_\C$ introduced in \Cref{eq:Safe-TRPO} up to a mismatch between the policy advantage and the performance difference. 
The motivation for substituting the policy cost advantage for the performance difference is their equivalence up to first order and that we can estimate the advantage from samples of $d_\pi$. 
Similar to CPO, we can guarantee an almost improvement of the return~\citep{achiam2017constrained}, despite the new divergence. 

\begin{restatable}[C-TRPO reward update]{proposition}{propimprovement}
\label{prop:C-TRPO-improvement}
    Set $\epsilon_r = \max_s |\mathbb{E}_{a\sim\pi_{k+1}}A_r^{\pi_k}(s,a)|$. 
    The expected reward of a policy updated with C-TRPO is bounded from below by \begin{equation}
        V_r(\pi_{k+1}) \geq V_r(\pi_{k}) -\frac{\sqrt{2\delta} \gamma \epsilon_r}{1-\gamma}. 
    \end{equation}
\end{restatable}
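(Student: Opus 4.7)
The plan is to derive the bound as a direct consequence of Theorem~\ref{th:achiams-bound} combined with the feasibility constraints of the approximate C-TRPO subproblem in~\eqref{eq:approx-c-trpo}. First I apply that theorem with $f = r$, $\pi_1 = \pi_{k+1}$, and $\pi_2 = \pi_k$, which gives
\begin{equation*}
    V_r(\pi_{k+1}) - V_r(\pi_k) \;\geq\; \mathbb{A}_r^{\pi_k}(\pi_{k+1}) - \frac{2\gamma\epsilon_r}{1-\gamma}\sqrt{\tfrac{1}{2}\,\bar D_{\operatorname{KL}}(\pi_{k+1}||\pi_k)}.
\end{equation*}
Two separate observations then collapse this to the claimed bound.

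The first observation is that the surrogate advantage is nonnegative at the new iterate. Since $\pi_k$ is trivially feasible for the C-TRPO subproblem (we have $\bar D_\C(\pi_k||\pi_k)=0\le\delta$) and $\mathbb{A}_r^{\pi_k}(\pi_k) = \mathbb{E}_{s,a\sim d_{\pi_k}}[A_r^{\pi_k}(s,a)] = 0$ by the standard advantage identity, the optimality of $\pi_{k+1}$ for the subproblem forces $\mathbb{A}_r^{\pi_k}(\pi_{k+1}) \ge 0$, so the first term on the right-hand side may simply be dropped.

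The second observation is a bound on the KL contribution inside the square root. Here I would use that $\bar D_\phi$ is itself a Bregman divergence in the scalar variable $\mathbb{A}_c^{\pi_k}$ (this is exactly the rewriting recorded in \Cref{subsec:analysis}, with $u = \delta_b - \mathbb{A}_c^{\pi_k}(\pi_{k+1})$ and $v = \delta_b$), so $\bar D_\phi(\pi_{k+1}||\pi_k) \ge 0$ whenever $\phi$ is convex. Combining this with the trust region constraint gives
\begin{equation*}
    \bar D_{\operatorname{KL}}(\pi_{k+1}||\pi_k) \;\le\; \bar D_{\operatorname{KL}}(\pi_{k+1}||\pi_k) + \beta\,\bar D_\phi(\pi_{k+1}||\pi_k) \;=\; \bar D_\C(\pi_{k+1}||\pi_k) \;\le\; \delta.
\end{equation*}
Plugging both observations into the performance-difference inequality yields $V_r(\pi_{k+1}) - V_r(\pi_k) \ge -\sqrt{2\delta}\,\gamma\epsilon_r/(1-\gamma)$, as desired. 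The calculation is short; the only conceptual point is noticing that the C-TRPO trust region is contained in the ordinary TRPO trust region of radius $\delta$ because the extra term $\beta\bar D_\phi$ is nonnegative, which is what I would flag as the main (albeit modest) step, since it requires recognising $\bar D_\phi$ as a genuine Bregman divergence rather than an ad hoc penalty.
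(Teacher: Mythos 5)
Your proposal is correct and follows essentially the same route as the paper's proof: apply the lower bound of Theorem~\ref{th:achiams-bound} with $f=r$, use $\mathbb{A}_r^{\pi_k}(\pi_{k+1})\ge 0$ from the feasibility of $\pi_k$ in the subproblem, and bound the KL term by $\delta$ via the nonnegativity of $\bar D_\phi$. Your write-up is in fact slightly more explicit than the paper's about why the surrogate advantage is nonnegative and why $\bar D_\phi\ge 0$, but there is no substantive difference in approach.
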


Constraint violation, however, behaves slightly differently for the two algorithms. To see this, we establish a more concrete relation between C-TRPO and CPO.
As $\beta \searrow 0$, the solution to \Cref{eq:approx-c-trpo} approaches the constraint surface in the worst case, and we recover CPO, see Figure~\ref{fig:pullfigure}. 

\begin{restatable}[]{proposition}{CTRPOvsCPO}
\label{prop:C-TRPO-vs-CPO}
The approximate C-TRPO update approaches the CPO update in the limit as $\beta \searrow 0$.
\end{restatable}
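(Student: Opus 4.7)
The plan is to view the C-TRPO subproblem as a log-barrier relaxation of the CPO subproblem in the sense of classical interior-point methods and let the barrier weight $\beta$ tend to zero. Writing $\delta_b=b-V_c^{\pi_k}$, denoting by $F_\beta=\{\pi\in\Pi:\bar D_\K(\pi\|\pi_k)+\beta\bar D_\phi(\pi\|\pi_k)\le\delta\}$ and $F_{\text{CPO}}=\{\pi\in\Pi:\bar D_\K(\pi\|\pi_k)\le\delta,\;\mathbb{A}_c^{\pi_k}(\pi)\le\delta_b\}$ the respective feasible sets, and by $V_\beta^\ast$, $V_{\text{CPO}}^\ast$ the corresponding optima of $\mathbb{A}_r^{\pi_k}$, the goal is to show $V_\beta^\ast\to V_{\text{CPO}}^\ast$ and that every accumulation point of the maximizers $\pi_\beta^\ast$ solves the CPO subproblem. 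Two structural facts will be used throughout: $\bar D_\phi\ge 0$ since it is the Bregman divergence of a convex $\phi$, and $\bar D_\phi(\pi\|\pi_k)\to+\infty$ as $\mathbb{A}_c^{\pi_k}(\pi)\nearrow\delta_b$, inherited from $\phi'(x)\to+\infty$ as $x\searrow 0$.

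First I would establish $F_\beta\subseteq F_{\text{CPO}}$ for every $\beta>0$: nonnegativity of $\bar D_\phi$ together with the C-TRPO constraint yields $\bar D_\K(\pi\|\pi_k)\le\delta$, while finiteness of $\bar D_\phi$ on $F_\beta$ forces $\mathbb{A}_c^{\pi_k}(\pi)<\delta_b$. This immediately gives the one-sided bound $V_\beta^\ast\le V_{\text{CPO}}^\ast$ for every $\beta>0$.

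The reverse inequality is obtained by interior approximation. For any $\pi'$ with $\bar D_\K(\pi'\|\pi_k)<\delta$ and $\mathbb{A}_c^{\pi_k}(\pi')<\delta_b$, the value $\bar D_\phi(\pi'\|\pi_k)$ is a fixed finite number, so for all sufficiently small $\beta>0$ we have $\pi'\in F_\beta$ and hence $\mathbb{A}_r^{\pi_k}(\pi')\le\liminf_{\beta\searrow 0}V_\beta^\ast$. To conclude that $V_{\text{CPO}}^\ast$ equals the supremum of $\mathbb{A}_r^{\pi_k}$ over this strict interior, I would fix a CPO maximizer $\pi^\ast$ and consider the segment $\pi_t=(1-t)\pi_k+t\pi^\ast$: the hysteresis condition $V_c^{\pi_k}<b$ makes $\pi_k$ strictly safe, so convexity of $\bar D_\K(\cdot\|\pi_k)$ and linearity of $\mathbb{A}_c^{\pi_k}(\cdot)$ place $\pi_t$ in the relative interior of $F_{\text{CPO}}$ for every $t\in[0,1)$, and linearity of $\mathbb{A}_r^{\pi_k}$ then gives $\mathbb{A}_r^{\pi_k}(\pi_t)=tV_{\text{CPO}}^\ast\to V_{\text{CPO}}^\ast$ as $t\nearrow 1$.

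Combining the two inequalities yields $V_\beta^\ast\to V_{\text{CPO}}^\ast$; compactness of the policy simplex and continuity of $\mathbb{A}_r^{\pi_k}$ then force every accumulation point of $(\pi_\beta^\ast)_{\beta>0}$ to be a CPO maximizer, which is the claimed convergence of the updates. The main obstacle I foresee is this interior-density step: a CPO optimum sitting at the intersection of both active constraints $\bar D_\K=\delta$ and $\mathbb{A}_c^{\pi_k}=\delta_b$ must still be reached through the strict interior of $F_{\text{CPO}}$, and this is exactly where the Slater-type condition $V_c^{\pi_k}<b$ (obtained from $\pi_k\in\Pi_{\safe}^{\textsc{H}}$ with $b_\textsc{H}<b$) is invoked so that the segment from $\pi_k$ stays strictly interior on $[0,1)$ and produces the required approximating sequence; absent strict feasibility of $\pi_k$ one would instead appeal to the standing nontriviality assumption on $\Pi_{\safe}$ to produce a separate Slater point.
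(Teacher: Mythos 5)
Your argument follows essentially the same route as the paper's proof: both rest on the inclusion $P_\beta\subseteq P_{\textup{CPO}}$ coming from $\bar D_\phi\ge 0$, the monotone growth of the C-TRPO feasible sets as $\beta\searrow 0$, and the observation that their union is the strict interior of the CPO trust region. The paper stops at this set-theoretic statement, whereas you carry it through to convergence of the optimal values and of the maximizers via the Slater-point segment argument and compactness --- a legitimate completion of the same idea, and your point that this final step needs strict safety of $\pi_k$ (so that the supremum over the interior actually attains the CPO optimum) is a real subtlety the paper's informal ending glosses over.
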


Intuitively, 
solving the C-TRPO problem with successively smaller values of $\beta$, 
would be similar to CPO with the interior point method using $\bar D_\phi(\cdot||\pi_k)$ as the barrier function.
However, C-TRPO is more conservative than CPO for any $\beta>0$ and as $\beta \to +\infty$ the update is maximally constrained in the cost-increasing direction. 

\begin{restatable}[C-TRPO worst-case constraint violation]{proposition}{CTRPOworstcase}
\label{prop:C-TRPO-worst-case}
    Let $\overline{D}_{\C}(\pi_{k+1}||\pi_k)\le \delta$ with $\delta>0$ and set $\epsilon_c = \max_s |\mathbb{E}_{a\sim\pi_{k+1}}A_c^{\pi_k}(s,a)|$. 
    It holds that 
    \begin{equation}
        V_c(\pi_{k+1}) \leq V_c(\pi_{k}) + \mathbb A^{\pi_k}_c(\pi_{k+1}) + 
        \frac{\sqrt{2 
        \delta(\beta) 
        } \gamma \epsilon_c}{1-\gamma},
    \end{equation}
    where $\delta(\beta) = \delta - \beta \overline{D}_\phi(\pi_{k+1}, \pi_k) \le \delta$ 
    is decreasing in $\beta>0$, $\lim_{\beta\to0}\delta(\beta)=\delta$, 
    and $\delta(\beta) \to 0$ for $\beta\to \delta \overline{D}_{\C}(\pi_{k+1}||\pi_k) / \overline{D}_\phi(\pi_{k+1}, \pi_k)$. 
\end{restatable}

This result is analogous to the worst-case constraint violation for CPO~\citep[Proposition 2]{achiam2017constrained}, where the term $\delta(\beta)$ is replaced by $\delta$. 
As $\delta(\beta)\le \delta$ for all $\beta>0$, the bound for C-TRPO is higher than the corresponding guarantee for CPO. 
For $\beta\to0$, the bound converges to the CPO bound, where for $\beta\to+\infty$, the bound becomes $V_c(\pi_{k+1})\le V_c(\pi_k)$, see~\Cref{app:subsec:performance-guarantees}. 

\subsection{Invariance and Convergence of Constrained Natural Policy Gradients} 
\label{sec:s-npg-analysis}
It is well known that TRPO is equivalent to a natural policy gradient method with an adaptive step size, see also~\Cref{appendix:extended-background}.
We study the time-continuous limit of C-TRPO and guarantee safety during training and global convergence. 
In the context of constrained TRPO in \Cref{eq:Safe-TRPO}, we study the natural policy gradient flow 
\begin{align}\label{eq:NPG-flow-parameters}
    \partial_t \theta_t = G_\C(\theta_t)^+ \nabla V_r(\theta_t),
\end{align}
where $G_\C(\theta)^+$ denotes a pseudo-inverse of $G_\C(\theta)_{ij} = \partial_{\theta_i}d_\theta^\top \nabla^2 \Phi_\C(d_\theta) \partial_{\theta_j} d_\theta$ and $\theta\mapsto\pi_\theta$ is a differentiable policy parametrization. 
Moreover, we assume that $\theta\mapsto \pi_\theta$ is regular, that it is %
surjective and the Jacobian is of maximal rank everywhere. 
This assumption implies overparametrization but is satisfied for common models like tabular softmax, tabular escort, or expressive log-linear policy parameterizations~\citep{agarwal2021theory,mei2020escaping,muller2023geometry}. 

We denote the set of safe parameters by $\Theta_{\operatorname{safe}}\coloneqq \{\theta\in\R^p : \pi_\theta\in\Pi_{\operatorname{safe}} \}$, which is non-convex in general and say that $\Theta_{\operatorname{safe}}$ is \emph{invariant} under \Cref{eq:NPG-flow-parameters} if $\theta_0\in\Theta_{\operatorname{safe}}$ implies $\theta_t\in\Theta_{\operatorname{safe}}$ for all $t$. 
Invariance is associated with safe control during optimization and is typically achieved via control barrier function methods~\citep{Ames_2017, cheng2019endtoendsaferl}. 
We study the evolution of the state-action distributions $d_t = d^{\pi_{\theta_t}}$ as this allows us to employ the linear programming formulation of CMDPs %
and we obtain the following convergence guarantees.

\begin{restatable}[Safety during training]{theorem}{thminvariance}
\label{thm:invariance}
Assume that $\phi\colon\R_{>0}\to\R$  satisfies $\phi'(x)\to+\infty$ for $x\searrow0$ and consider a regular policy parameterization. 
Then the set $\Theta_{\safe}$ is invariant under~\Cref{eq:NPG-flow-parameters}. 
\end{restatable}

A visualization of policies obtained by C-NPG for different safe initializations and varying choices of $\beta$ is shown in Figure \ref{fig:varying-beta} for a toy MDP. 
We see that for even small choices of $\beta$ the trajectories don't cross the constraint surface and the updates become more conservative for larger choices of $\beta$. 

\begin{figure}
    \centering
    \begin{tikzpicture}

    \node at (0,0){ 

    \includegraphics[height=0.4\linewidth]{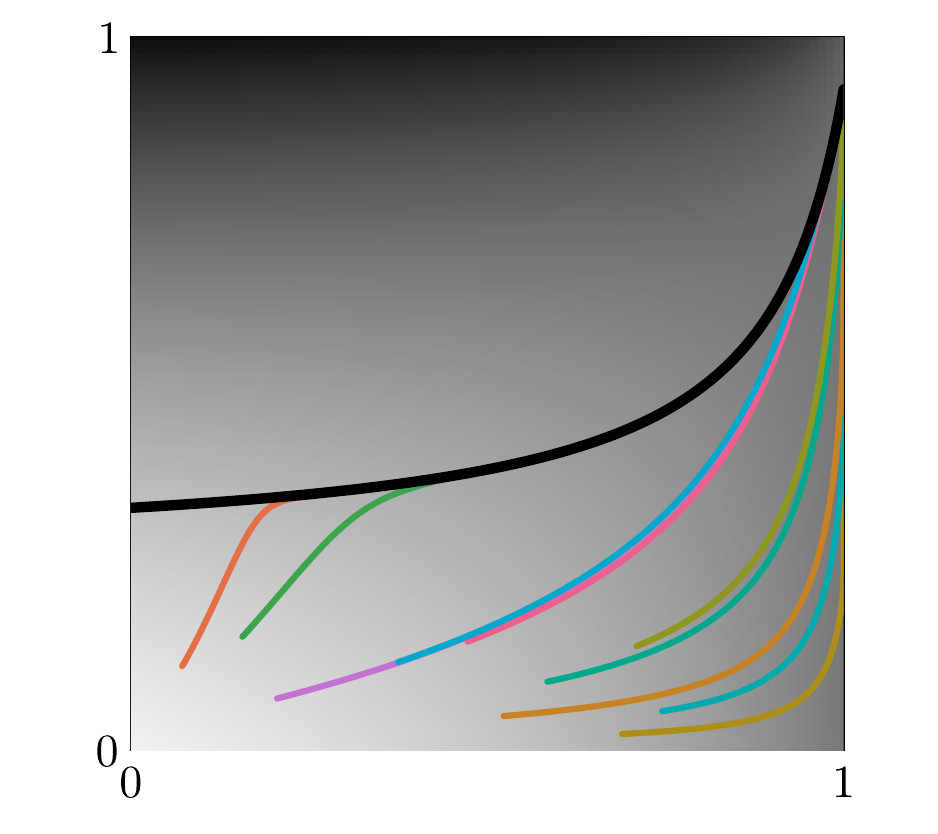}

    \includegraphics[height=0.4\linewidth]{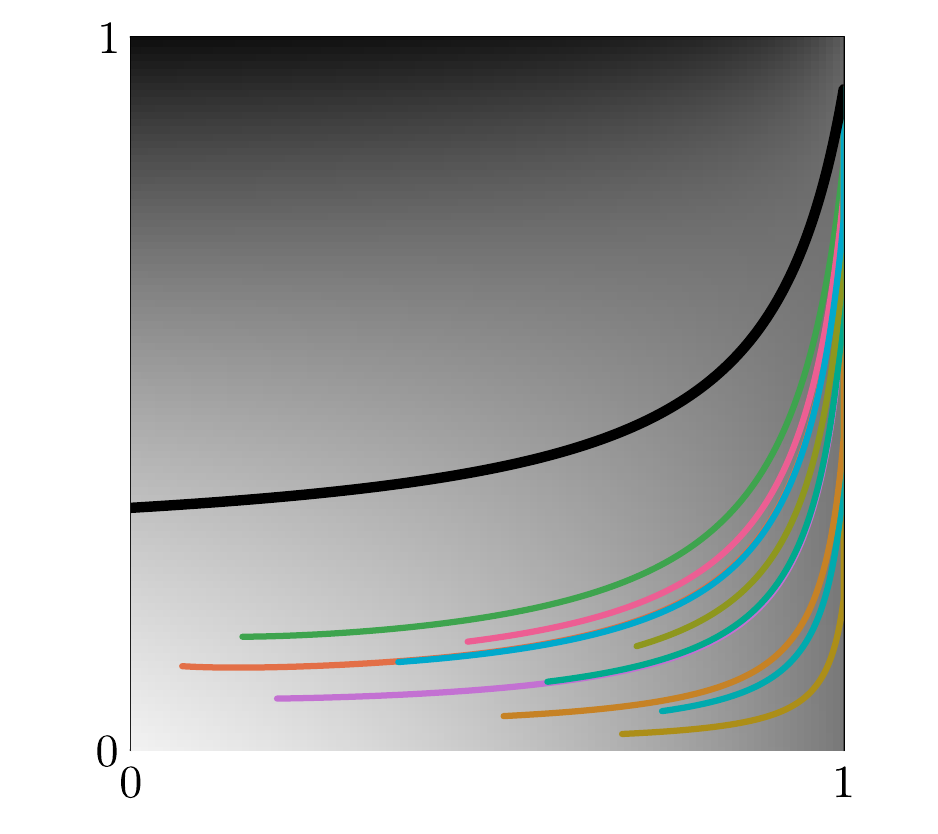}
    };
    \node at (4,0.0){\includegraphics[trim={14.4cm 0 0 0},clip,height=0.4\linewidth]{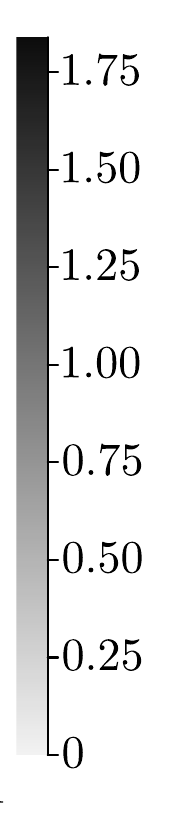}};

    \node at (-1.8,1.8){C-NPG ($\beta=10^{-2}$)};
    \node at (2.0,1.8){C-NPG ($\beta=1$)};
    \node at (2.0,-1.8){\small$\pi(a_1|s_1)$};
    \node at (-1.9,-1.8){\small$\pi(a_1|s_1)$};
    \node [rotate=90] at (-3.8,0.1){\small$\pi(a_1|s_2)$};
    \end{tikzpicture}
    \caption{
    Shown is the policy set $\Pi \cong[0,1]^2$ for an MDP with two states and two actions with a heatmap of the expected reward $V_r$; 
    the constraint surface is shown in black with the safe policies below; 
    optimization trajectories are shown for 10 safe initialization and 
    for $\beta=10^{-2}, 1$.
    }
    \label{fig:varying-beta}
\end{figure}

\begin{restatable}{theorem}{thmconvergence}
\label{thm:convergence}
Assume that $\phi'(x)\to+\infty$ for $x\searrow0$, set  $V_{r,\C}^\star\coloneqq\max_{\pi\in\Pi_{\operatorname{safe}}} V_r(\pi)$ and denote the set of optimal constrained policies by $\Pi^\star_{\operatorname{safe}}= \{ \pi\in\Pi_{\operatorname{safe}} : V_{r}(\pi) = V_{r,\C}^\star \}$, consider a regular policy parametrization and let $(\theta_t)_{t\ge0}$ solve \Cref{eq:NPG-flow-parameters}. 
It holds that $V_r(\pi_{\theta_t}) \to V_{r,\C}^\star$ and 
\begin{align}\label{eq:implicit-bias}
    \lim_{t\to+\infty} \pi_t = \pi^\star_{\operatorname{safe}} = \arg\min\{ D_\C(\pi^\star, \pi_0) : \pi^\star\in \Pi^\star_{\operatorname{safe}} \}. 
\end{align}

\end{restatable}

In case of multiple optimal policies, \Cref{eq:implicit-bias} identifies the optimal policy of the CMDP that the natural policy gradient method converges to as the projection of the initial policy $\pi_0$ to the set of optimal safe policies $\Pi^\star_{\operatorname{safe}}$ with respect to the constrained divergence $D_\C$. 
In particular, this implies that the limiting policy $\pi^\star_{\operatorname{safe}}$ satisfies as few constraints with equality as required to be optimal. 
To see this, note that $\Pi^\star_{\operatorname{safe}}$ forms a face of $\cD_{\operatorname{safe}}$ and that Bregman projections lie at the interior of faces~\citep[Lemma A.2]{muller2024fisher} and hence satisfy as few linear constraints as required. 

\section{Computational Experiments}\label{sec:experiments}

\begin{figure*}[ht]
    \centering
    \includegraphics[width=0.95\linewidth]{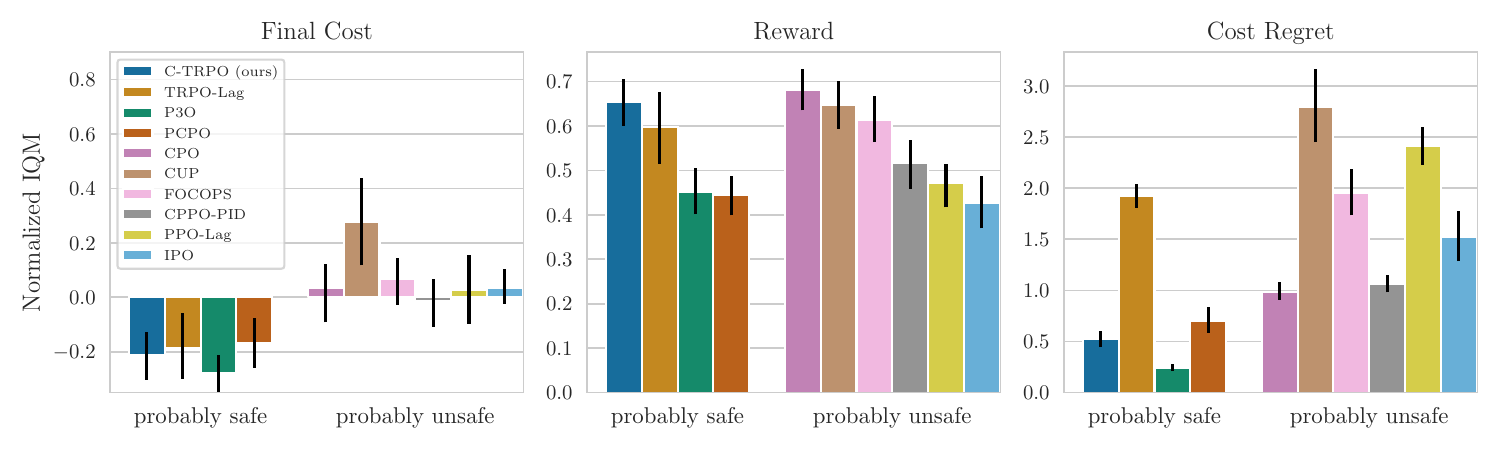}
    \caption{Comparison of safe policy optimization algorithms based on the 25\% Inter Quartile Mean (IQM) across 5 seeds and 8 tasks. From left to right, the following metrics are shown measured at 10 million training steps: the \emph{final cost}, i.e. the mean return of the cost at the last iterate (threshold-normalized and centered at zero), the mean return of the \emph{reward} (normalized with the performance of unconstrained PPO), and the mean \emph{cost regret} (normalized by CPO's cost regret). The algorithms are sorted into \emph{probably safe} and \emph{probably unsafe}, based on their final constraint violation (negative is probably safe), and by expected reward within each group. Note that cost regret is different from the final cost, since it sums up all constraint violations throughout training.}
    \label{fig:iqm-normalized-scores}
\end{figure*}

\paragraph{Setup and main results} We benchmark C-TRPO against 9 common safe policy optimization algorithms (CPO \cite{achiam2017constrained}, PCPO \cite{yang2020projectionbasedconstrainedpolicyoptimization}, CPPO-PID \cite{stooke2020responsivesafetyreinforcementlearning}, PPO-Lag and TRPO-Lag \cite{achiam2017constrained, ray2019benchmarking}, FOCOPS \cite{zhang2020first}, CUP \cite{yang2022constrained}, 
IPO \cite{liu2020ipo} 
and P3O \cite{zhang2022penalizedproximalpolicyoptimization}) on 8 tasks (4 Navigation and 4 Locomotion) from the Safety Gymnasium~\citep{ji2023safety} benchmark.\footnote{Code: \url{https://github.com/milosen/ctrpo}} The locomotion tasks reward distance traveled, while penalizing high velocities, and the navigation tasks reward goal reaching and penalize certain unsafe states. For the C-TRPO implementation we fix the convex generator $\phi(x) = x\log(x)$, motivated by its superior performance in our experiments, see Appendix~\ref{appendix:divergence-surrogate}, 
and $b_\textsc{H}=0.8b$ and $\beta=1$ across all experiments.
Each algorithm is evaluated by training for 10 million environment steps with 5 seeds each, and the cost regret is monitored throughout training for every run. 
To get a better sense of the safety of the algorithms during training, we take an online learning perspective and include as a metric the 
cost regret introduced in \Eqref{eq:cumcost} \citep{efroni2020explorationexploitationconstrainedmdps,müller2024trulynoregretlearningconstrained}
For completeness, we also report environment-wise sample efficiency curves and the results of \Cref{fig:iqm-normalized-scores} in a tabular format in Appendix \ref{appendix:additional-experiments-envs}.

\paragraph{Discussion} 
In Figure \ref{fig:iqm-normalized-scores} the interquartile mean (IQM) of normalized expected reward, cost, and cost regret, including their stratified bootstrap confidence intervals~\citep{agarwal2021deep} is shown.
It can be observed that C-TRPO is competitive with the leading algorithms of the benchmark in terms of expected return, while being safe on the last iterate as opposed to CPO and CUP, see Figure \ref{fig:iqm-normalized-scores}. 
Furthermore, it achieves notably lower cost regret throughout training than the high-return algorithms. 
TRPO-Lag., which is also safe at convergence, has notably higher cost regret than the other safe methods, meaning it oscillates more around the threshold during training, see also Figures \ref{fig:additional-experiments-envs-navi} and \ref{fig:additional-experiments-envs-loco} in the appendix. 
In general, methods that, in practice, rely on Lagrangian-inspired optimization routines (TRPO-Lag., FOCOPS, and CUP) perform well in terms of reward, but poorly in terms of cost regret.
C-TRPO's regret performance is comparable to the more conservative PCPO algorithm, but is not as low as that of P3O. 
The low cost regret achieved by P3O comes at the price of expected reward, which is due to it's wide margin to the threshold at the last iterate.

Our experiments reveal that C-TRPO's performance is closely tied to the accuracy of divergence estimation, which hinges on the precise estimation of the cost advantage and value functions. C-TRPO’s behavior w.r.t noisy cost function estimates is analyzed in Appendix~\ref{app:noisy-cost}.
The safety parameter $\beta$ modulates the stringency with which C-TRPO satisfies the constraint, and can do so without limiting the expected return on most environments at least for $\beta\leq1$, see Figure \ref{fig:c-trpo-hyper} in the appendix. For higher values, the expected return starts to degrade, partly due to $\bar D_\phi$ being relatively noisy compared to $\bar D_{\operatorname{KL}}$
and thus we recommend the choice $\beta=1$.

Further, we observe that in most environments constraint violations seem to reduce as the algorithm converges, meaning that the regret flattens over time. 
This behavior suggests that the divergence estimation becomes increasingly accurate over time, potentially allowing C-TRPO to achieve sublinear regret. 
However, we leave regret analysis of the finite sample regime for future research.

We attribute the improved constraint satisfaction compared to CPO to a slowdown and reduction in the frequency of oscillations around the cost threshold, which mitigates overshoot behaviors that could otherwise violate constraints. The modified gradient preconditioner appears to deflect the parameter trajectory away from the constraint, see Figure \ref{fig:varying-beta}. This effect may also be partially attributed to the hysteresis-based recovery mechanism, which helps smooth updates by leading the iterate away from the boundary of the safe set. 
Employing a hysteresis fraction 
$0<b_{\textsc{H}}<b$ might also be beneficial because C-TRPO's divergence estimates tend to be more reliable for strictly safe policies. 
The effect of the choice of $b_\textsc{H}$ is shown in Figure \ref{fig:c-trpo-hyper-2} in the appendix. Finally, we present ablations in \Cref{appendix:additional-experiments-ablation}, which support our claims that both components---the modified trust region and hysteresis---are effective in reducing safety violations.

\section{Conclusion and outlook}
We introduced C-TRPO and C-NPG, two novel methods for solving CMDPs. C-TRPO extends Trust Region Policy Optimization (TRPO) by embedding constraint handling into the policy space geometry, while C-NPG provides a provably safe natural policy gradient method for CMDPs.  
Our experiments showed that C-TRPO reduces constraint violations while maintaining competitive returns compared to state-of-the-art constrained RL algorithms.
Despite these advances, challenges remain. Estimating the proposed divergence is difficult, and we did not analyze its finite-sample properties. 
Additionally, CMDPs constrain average cost return, making trajectory-wise or state-wise safety constraints harder to model. Future work includes integrating C-TRPO with model-based methods \cite{as2025actsafe}, leveraging mirror descent \cite{tomar2022mirror}, and considering alternative formalisms for risk-sensitive RL like distributional RL~\cite{dabney2018distributional} or Saut{\'e} RL~\cite{sootla2022saute}.

Overall, the proposed algorithms, C-TRPO and C-NPG, present a step forward in general-purpose CMDP algorithms and move us closer to deploying RL in high-stakes, real-world applications.

\section*{Impact Statement}
This paper presents work whose goal is to advance the field of 
constrained Markov decision processes and safe reinforcement learning. 
There are many potential societal consequences of our work, none of which we feel must be specifically highlighted here.

\section*{Acknowledgements}
N. M. and N.S. are supported by BMBF (Federal Ministry of Education and Research) through ACONITE (01IS22065) and the Center for Scalable Data Analytics and Artificial Intelligence (ScaDS.AI.) Leipzig and by the European Union and the Free State of Saxony through BIOWIN. N.M. is also supported by the Max Planck IMPRS CoNI Doctoral Program. 

\bibliography{manuscript}
\bibliographystyle{twocolumn2025}

\newpage
\appendix
\onecolumn

\clearpage
\section{Extended Background}\label{appendix:extended-background}

We consider the infinite-horizon discounted Markov decision process (MDP), given by the tuple $(\bS, \A, P, r, \mu, \gamma)$. 
Here, $\bS$ and $\A$ are the finite state-space and action-space respectively. Here, we make the restriction to finite MDPs as this simplifies the presentation. For a discussion of continuous state and action spaces, we refer to~\Cref{app:subsec:beyond}.
Further, $P\colon \bS \times \A \rightarrow \Delta_\bS$ is the transition kernel, $r\colon \bS \times \A\rightarrow \mathbb{R}$ is the reward function, $\mu \in \Delta_\bS$ is the initial state distribution at time $t=0$, and $\gamma\in [0, 1)$ is the discount factor.
The space $\Delta_\bS$ is the set of categorical distributions over $\bS$.

The Reinforcement Learning (RL) protocol is usually described as follows: 
At time $t=0$, an initial state $s_0$ is drawn from $\mu$.
At each integer time-step $t$, the agent chooses an action according to it's (stochastic) behavior policy $a_t \sim \pi(\cdot | s_t)$.
A reward $r_t = r(s_t, a_t)$ is given to the agent, and a new state $s_{t+1}\sim P(\cdot|s_t,a_t)$ is sampled from the environment.
Given a policy $\pi$, the value function $V_r^\pi \colon \bS \to \mathbb{R}$, action-value function $Q_r^\pi\colon \bS\times\A \to \mathbb{R}$, and advantage function $A_r^\pi\colon \bS \times \A \to \mathbb{R}$ associated with the reward $r$ are defined as
\begin{equation*}
    V_r^\pi(s) \coloneqq (1-\gamma)\,\mathbb{E}_\pi \left[ \sum_{t=0}^{\infty} \gamma^t r(s_t, a_t) \Big|  s_0 = s \right],    
\end{equation*}
\begin{equation*}
    Q_r^\pi(s, a) \coloneqq (1-\gamma)\,\mathbb{E}_\pi \left[ \sum_{t=0}^{\infty} \gamma^t r(s_t, a_t) \Big| s_0 = s, a_0 = a \right] \text{ and }
A_r^\pi(s,a) \coloneqq Q_r^\pi(s, a) - V_r^\pi(s). 
\end{equation*}
where and the expectations are taken over trajectories of the Markov process resulting from starting at $s$ and following policy $\pi$.
The goal is to
\begin{equation}
    \text{maximize}_{\pi \in \Pi} \; V_r^\pi(\mu) 
\end{equation}
where $V_r^\pi(\mu)$ is the expected value under the initial state distribution
$
V_r^\pi(\mu) \coloneqq  \mathbb{E}_{s \sim \mu}[V_r^\pi(s)].
$
We will also write $V_r^\pi = V_r^\pi(\mu)$, and omit the explicit dependence on $\mu$ for convenience, and we write $V_r(\pi)$ when we want to emphasize its dependence on $\pi$.

\paragraph{The Dual Linear Program for MDPs}

Any stationary policy $\pi$ induces a discounted state-action (occupancy) measure $d_\pi\in\Delta_{\mathcal S\times\mathcal A}$, indicating the relative frequencies of visiting a state-action pair, discounted by how far the visitation lies in the future.
It is a probability measure defined as
\begin{equation}
    d_\pi(s,a) \coloneqq (1-\gamma)
    \sum_{t=0}^\infty \gamma^t \mathbb{P}_\pi (s_t = s) \pi(a|s),
\end{equation}
where $\mathbb{P}_\pi (s_t = s)$ is the probability of observing the environment in state $s$ at time $t$ given the agent follows policy $\pi$.
For finite MDPs, it is well-known that maximizing the expected discounted return can be expressed as the linear program
\begin{equation}
    \max_d\, r^\top d
    \quad \text{subject to } d\in \cD,
\end{equation}
where $\cD$ is the set of feasible state-action measures~\cite{feinberg2012handbook}. 
This set is also known as the \emph{state-action polytope}, defined by
\begin{align*}
    \cD = \left\{ d\in\mathbb R_{\ge0}^{\mathcal S\times\mathcal A} :
    \ell_s(d) = 0
    \text{ for all } s\in\mathcal S \right\},
\end{align*}
where the linear constraints $\ell_s(d)$ are given by the \emph{Bellman flow equations} 
\begin{align*}
    \ell_s(d) = {\displaystyle
    d(s) 
    - \gamma \sum_{s', a'} d(s', a')P(s|s',a') - (1-\gamma) \mu(s)},
\end{align*}
where $d(s) = \sum_{a}d(s,a)$ denotes the state-marginal of $d$. 
For any state-action measure $d$ we obtain the associated policy via conditioning, meaning 
\begin{align}
    \pi(a|s) \coloneqq \frac{d(s,a)}{\sum_{a'} d(s,a')}
\end{align}
in case this is well-defined
. 
This provides a one-to-one correspondence between policies 
and the state-action distributions under the following assumption. 

\begin{assumption}[Exploration]\label{assumption:exploration}
    For any policy $\pi\in\Delta_\mathcal A^\mathcal S$ we have $d_\pi(s)>0$ for all $s\in\mathcal S$. 
\end{assumption}

This assumption is standard in linear programming approaches and policy gradient methods where it is necessary for global convergence~\cite{kallenberg1994survey,mei2020global}. Note that $d\in\partial\mathcal D$ if and only if $d(s,a)=0$ for some $s,a$ and hence the boundary of $\mathcal D$ is given by 
\begin{align*}
    \partial\cD = 
    \Big\{ d_\pi : \pi(a|s) = 0 \text{ for some } s\in\mathcal S, a\in\mathcal A \Big\}.
\end{align*}

\paragraph{Constrained Markov Decision Processes}
Where MDPs aim to maximize the return, constrained MDPs (CMDPs) aim to maximize the return subject to a number of costs not exceeding certain thresholds. For a general treatment of CMDPs, we refer the reader to \cite{Altman1999ConstrainedMD}. 
An important application of CMDPs is in safety-critical reinforcement learning where the costs incorporate safety constraints. 
An infinite-horizon discounted CMDP is defined by the tuple $(\bS, \A, P, r, \mu, \gamma, \mathcal{C})$, consisting of the standard elements of an MDP and an additional constraint set $\mathcal{C}=\{(c_i,b_i)\}_{i=1}^m$, where $c_i\colon\bS\times\A\to\mathbb{R}$ are the cost functions and $b_i \in \mathbb{R}$ are the cost thresholds.

In addition to the value functions and the advantage functions of the reward that are defined for the MDP, we define the same quantities $V_{c_i}$, $Q_{c_i}$, and $A_{c_i}$ w.r.t the $i$th cost $c_i$, simply by replacing $r$ with $c_i$.
The objective is to maximize the discounted return, as before, but we restrict the space of policies to the safe policy set
\begin{align}
\Pi_{\operatorname{safe}} = \bigcap_{i=1}^m \Big\{ \pi 
:  V_{c_i}(\pi) \leq b_i \Big\}, 
\end{align}
where 
\begin{align}
V_{c_i}^\pi(\mu) \coloneqq  \mathbb{E}_{s \sim \mu}[V_{c_i}^\pi(s)].
\end{align}
is the expected discounted cumulative cost associated with the cost function $c_i$.
Like the MDP, the discounted cost CMPD can be expressed as the linear program
\begin{align}
    \begin{split}
        \max_d\, r^\top d 
    \quad \text{sbj. to } d\in \cD_{\operatorname{safe}}, 
    \end{split}
\end{align}
where
\begin{equation}
    \cD_{\operatorname{safe}} =  \bigcap_{i=1}^m \left\{d \in\R^{\mathcal S\times\mathcal A} :c_i^\top d \leq b_i \right\}\cap \cD
\end{equation}
is the safe occupancy set, see \Cref{fig:c-npg-problem}. 
\begin{figure}
    \centering
    \includegraphics[width=0.4\linewidth]{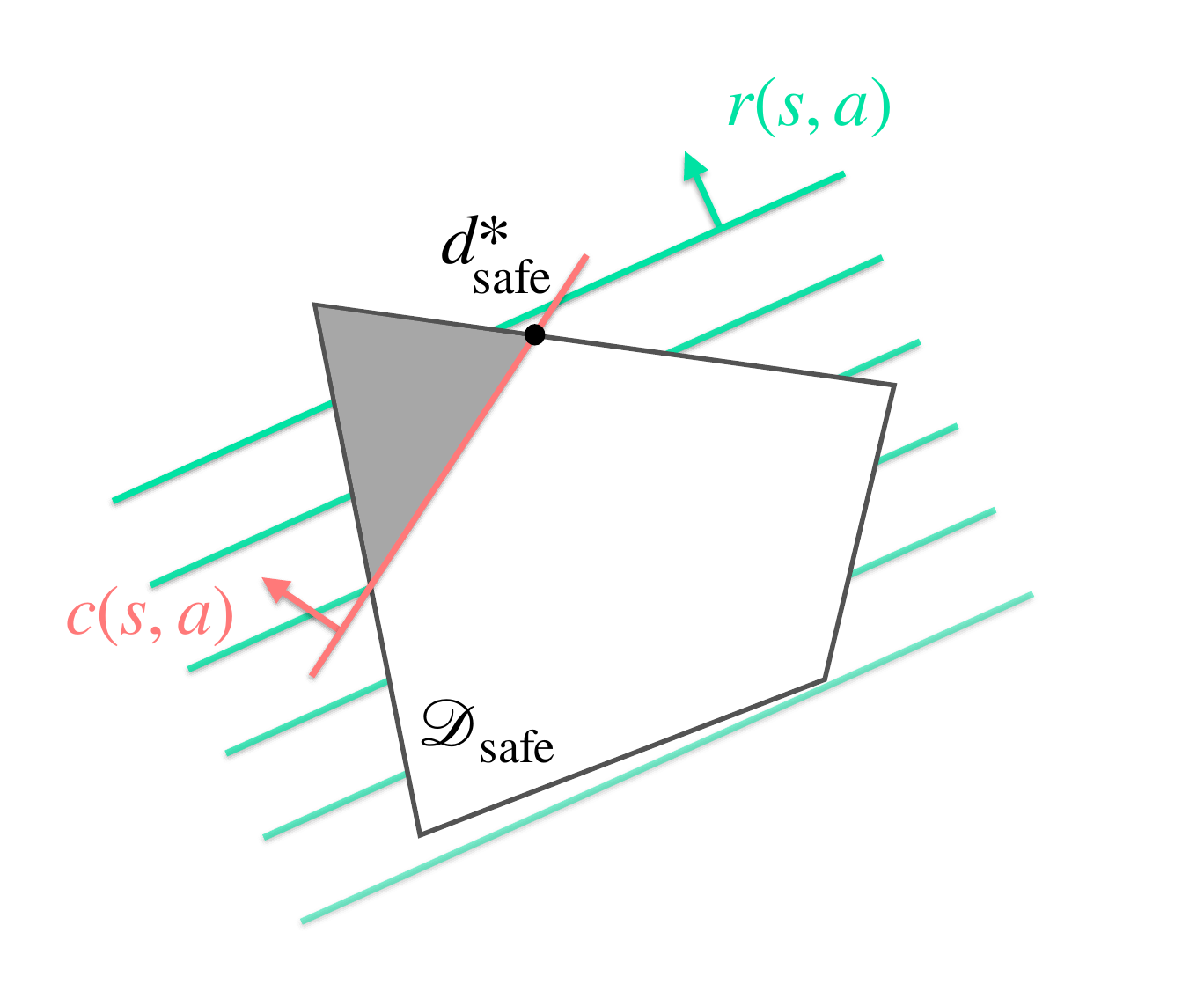}
    \caption{The dual linear program for a CMDP of two states and two actions.}
    \label{fig:c-npg-problem}
\end{figure}

\paragraph{Bregman divergences}
Here, we give a short introduction to the concept of Bregman divergences, which is required for the formulation of trust region methods. 
For this, we consider a convex subset of Euclidean space $C\subseteq\mathbb R^d$ with a non-empty interior $\operatorname{int}(C)$ and a strictly convex function $\phi\colon C\to\mathbb R$ which we assume to be differentiable on the interior $\operatorname{int}(C)$. 
Then, the \emph{Bregman divergence} induced by $\phi$ is given by 
\begin{align}
    D_\phi(x||y) \coloneqq \phi(x) - \phi(y) - \nabla \phi(y)^\top(x-y),
\end{align}
which is well defined for $x\in C, y\in\operatorname{int}(C)$. 
Intuitively, the Bregman divergence measures the difference between $\phi$ and its linearization at $y$. 
The strict convexity of $\phi$ ensures that $D_\phi(x||y) \ge 0$ and $D_\phi(x||y) = 0$ if and only if $x=y$. 
Therefore, Bregman divergences are commonly interpreted as a generalized measure for the distance between points, however, it is important to notice that it is not generally symmetric. 
An important example is the Euclidean distance $D_\phi(x||y) = \lVert x - y \rVert_2^2$ which arises from the choice $\phi(x) \coloneqq \lVert x \rVert_2^2$. 
Another important Bregman divergence is the Kullback-Leibler (KL) divergence 
\begin{align}
    \KL(p||q) \coloneqq \sum_{i=1}^d p_i \log \frac{p_i}{q_i} - \sum_{i=1}^d p_i + \sum_{i=1}^d q_i, 
\end{align}
where we use the common convention $0\log\frac00\coloneqq0$. 
Then, the KL divergence is defined for $p\in\mathbb R_{\ge0}^d$ and $q\in\mathbb R_{\ge0}^d$ which is absolutely continuous with respect to $p$, meaning that $p_i=0$ implies $q_i=0$. 
Note that if both $p$ and $q$ are probability vectors, meaning that $\sum_i p_i=\sum_i q_i=1$, we obtain 
\begin{align}
    \KL(p||q) \coloneqq \sum_{i=1}^d p_i \log \frac{p_i}{q_i}. 
\end{align}

\paragraph{Information Geometry of Policy Optimization}
Among the most successful policy optimization schemes are natural policy gradient (NPG) methods or variants thereof like trust-region and proximal policy optimization (TRPO and PPO, respectively). 
These methods assume a convex geometry and corresponding Bregman divergences in the state-action polytope, where we refer to~\cite{neu2017unified, muller2023geometry} for a more detailed discussion. %

In general, a trust region update is defined as
\begin{align}\label{app:eq:TRPO}
    \pi_{k+1} \in \argmax_{\pi \in \Pi %
    } 
    \mathbb{A}_{r}^{\pi_k}(\pi) 
    \quad \text{ sbj. to } 
    D_\Phi(d_{\pi_k}||d_{\pi}) \le \delta
    ,
\end{align}
where $D_\Phi\colon\cD\times\cD\to\mathbb R$ is a Bregman divergence induced by a suitably convex function $\Phi\colon \operatorname{int}(\cD) \rightarrow \mathbb{R}$.
The functional
\begin{align}\label{app:eq:surrogate}
    \mathbb{A}_{r}^{\pi_k}(\pi) = \mathbb{E}_{s\sim d_{\pi_k}, a\sim\pi_\theta(\cdot|s)}\big[A_{r}^{\pi_{k}}(s,a)\big],
\end{align}
as introduced in \citep{kakade2002Approximately}, is called the \emph{policy advantage}.  
As a loss function, it is also known as the surrogate advantage~\citep{schulman2017trust}, since we can interpret $\mathbb{A}$ as a surrogate optimization objective of the return. In particular, it holds for a parameterized policy $\pi_\theta$, that $\nabla_\theta \mathbb{A}_{r}^{\pi_{\theta_k}}(\pi_\theta)|_{\theta=\theta_k} = \nabla_\theta V_r(\theta_k)$, see \cite{kakade2002Approximately, schulman2017trust}. 
TRPO and the original NPG assume the same geometry~\citep{kakade2001natural,schulman2017trust}, since they employ an identical Bregman divergence
\begin{align*}
    D_{\operatorname{K}}(d_{\pi_1}||d_{\pi_2}) & \coloneqq \sum_{s,a}d_{\pi_1}(s,a) \log\frac{\pi_1(a|s)}{\pi_2(a|s)} = \sum_{s} d_{\pi_1}(s) D_{\operatorname{KL}}(\pi_1(\cdot|s)||\pi_2(\cdot|s)).
\end{align*}
We refer to $D_{\operatorname{K}}$ as the Kakade divergence and informally write $D_{\operatorname{K}}(\pi_1, \pi_2)\coloneqq D_{\operatorname{K}}(d_{\pi_1}, d_{\pi_2})$.
This divergence can be shown to be the Bregman divergence induced by the negative conditional entropy
\begin{equation}\label{app:eq:definition-conditional-entropy}
    \Phi_{\operatorname{K}}(d_\pi) \coloneqq \sum_{s,a} d_\pi(s,a) \log \pi(a|s),
\end{equation}
see~\cite{neu2017unified}.
It is well known that with a parameterized policy $\pi_\theta$, a linear approximation of $\mathbb{A}$ and a quadratic approximation of the Bregman divergence $D_{\operatorname{K}}$ at $\theta$, one obtains the \emph{natural policy gradient} step given by 
\begin{align}
    \theta_{k+1} = \theta_k + \epsilon_k G_{\operatorname{K}}(\theta_k)^+ \nabla R(\theta_k),
\end{align}
where $G_{\operatorname{K}}(\theta)^+$ denotes a pseudo-inverse of the Gramian matrix with entries equal to the state-averaged Fisher-information matrix of the policy 
\begin{align}
    G_{\operatorname{K}}(\theta)_{ij} & \coloneqq 
    \mathbb E_{s\sim d_{\pi_\theta}} \left[\sum_{a}  
    \frac{\partial_{\theta_i}\pi_\theta(a|s) \partial_{\theta_j}\pi_\theta(a|s)}{\pi_\theta(a|s)}  \right]\\
    & = \mathbb E_{d_{\pi_\theta}}[\partial_{\theta_i} \log \pi_\theta(a|s)\partial_{\theta_j} \log \pi_\theta(a|s)]
    , 
\end{align}
where we refer to~\cite{schulman2017trust} for a more detailed discussion. 

Consider a convex potential $\Phi\colon\cD\to\mathbb R$ or $\Phi\colon\cD_{\operatorname{safe}}\to\mathbb R$  and the TRPO update 
\begin{align}
    \theta_{k+1}\in\argmax 
    \mathbb{A}_{r}^{\pi_{\theta_k}}(\pi_\theta)
    \quad \text{sbj. to } D_\Phi(d_{\theta_k}||d_\theta) \le \epsilon.
\end{align}
In practice, one uses a linear approximation of $\mathbb{A}_{r}^{\pi_{\theta_k}}(\pi_\theta)$ and a quadratic approximation of $D_\Phi$ to compute the TRPO update. 
This gives the following approximation of TRPO 
\begin{align}
    \theta_{k+1}\in\argmax_{\theta} \nabla_{\theta}\mathbb{A}_r^{\theta_k}(\theta)|_{\theta=\theta_k} \cdot (\theta - \theta_k) \quad \text{sbj. to } \lVert \theta - \theta_k \rVert_{G(\theta_k)}^2 \le \epsilon,
\end{align}
where 
\begin{align}
    G(\theta)_{ij} = \partial_{\theta_i}d_\theta^\top \nabla^2\Phi(d_\theta) \partial_{\theta_j}d_\theta. 
\end{align}
Note that by the policy gradient theorem, it holds that 
\begin{align}
    \nabla_\theta \mathbb{A}_r^{\theta_k}(\theta)|_{\theta=\theta_k} = \nabla V_r(\theta_k). 
\end{align}
Thus, the approximate TRPO update is equivalent to 
\begin{align}
    \theta_{k+1} = \theta_k + \epsilon_k G(\theta_k)^+ \nabla V_r(\theta), 
\end{align}
where 
\begin{align}
    \epsilon_k = \frac{\sqrt{\epsilon}}{\lVert G(\theta_k)^+\nabla V_r(\theta_k) \rVert_{G(\theta_k)}}. 
\end{align}
Hence, the approximation TRPO update corresponds to a natural policy gradient update with an adaptively chosen step size. 

\begin{figure}[ht]
    \centering
    \includegraphics[width=0.8\linewidth]{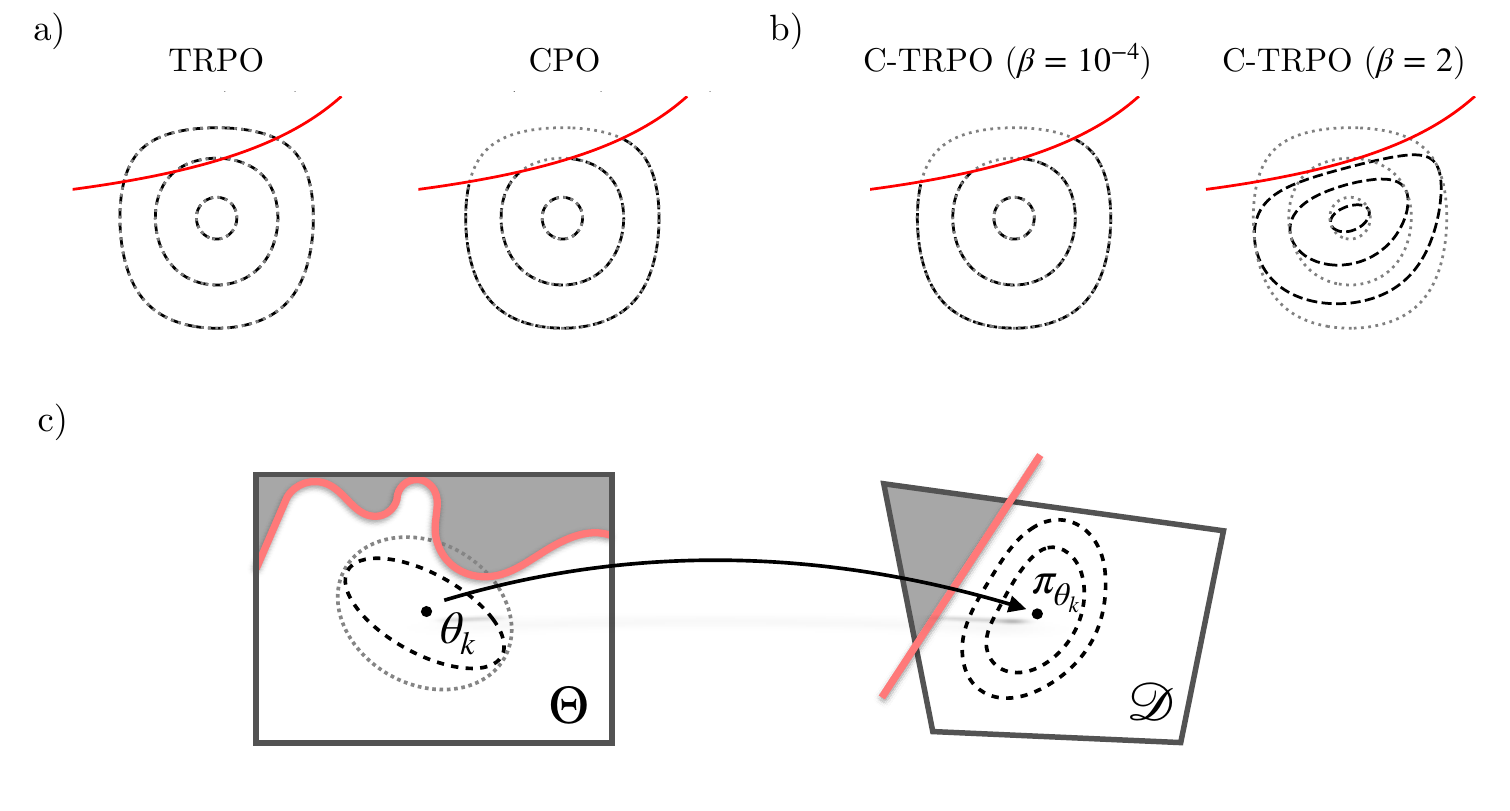}
    \caption{Illustration of policy divergences (dashed) close to the constraint (red). a) TRPO (dotted for reference) and CPO. b) C-TRPO's divergence depends on the hyper-parameter $\beta$, which modulates the strength of the barrier towards the constraint surface. For $\beta\searrow0$ we obtain an update equivalent to CPO, and 
    more conservative updates for larger values ($\beta=2$). The plots were generated with the toy MDP in \Cref{fig:varying-beta}. 
    c) Shown are the quadratic approximations of the divergence in parameter space, which is obtained by mapping the policy onto its occupancy measure
    , where a safe geometry can be defined using standard tools from convex optimization (safe region in white). 
    }
    \label{fig:pullfigure}
\end{figure}

\section{Details on the Safe Geometry for CMDPs}

\subsection{Safe Trust Regions}\label{appendix:divergence-derivations}

The safe mirror function for a single constraint is given by 
\begin{align}
    \Phi_{\operatorname{C}}(d) \coloneqq
     \Phi_{\operatorname{K}}(d)  + \sum_{i=1}^m \beta\, \phi(b-c^\top d), 
\end{align}
and the resulting Bregman divergence 
\begin{equation}
    D_{\C}(d_1||d_2) = \Phi_\C(d_1) - \Phi_\C(d_2) - \langle\nabla\Phi_\C(d_2),d_1-d_2\rangle.
\end{equation}
is a linear operator in $\Phi$, hence
\begin{equation}
    D_{\Phi(d) + \beta\phi(b-c^\top d)}(d_1||d_2) = D_{\Phi_\K}(d_1||d_2) + \beta D_\phi(d_1||d_2),
\end{equation}
where
\begin{align}
    D_\phi(d_1||d_2) &= \phi(b-c^\top d_1) - \phi(b-c^\top d_2) - \langle\nabla\phi(b-c^\top d_2),d_1-d_2\rangle\\
    &= \phi(b-c^\top d_1) - \phi(b-c^\top d_2) - \phi'(b-c^\top d_2)(c^\top d_1-c^\top d_2).\\
    &= \phi(b - V_c(\pi_1)) - \phi(b - V_c(\pi_2)) + \phi'(b-V_c(\pi_2))(V_c(\pi_1)-V_c(\pi_2)).
\end{align}

The last expression can be interpreted as the one-dimensional Bregman divergence $D_\phi(b - V_c(\pi)||b - V_c(\pi_k))$, which is a (strictly) convex function in $V_c(\pi)$ for fixed $\pi_k$ if $\phi$ is (strictly) convex.

\subsection{Details on C-TRPO}
\subsubsection{Surrogate Divergence}\label{appendix:divergence-surrogate}

In practice, the exact constrained KL-Divergence $D_{\operatorname{C}}$ cannot be evaluated, because it depends on the cost-return of the optimized policy $V_{c}(\pi)$. 
Therefore, we use the surrogate divergence
\begin{equation}
        \bar D_\phi(\pi_{\theta} || \pi_{\theta_k}) = \phi(b - V_c^{\pi_k} - \mathbb{A}_{c}^{\pi_k}(\pi)) - \phi(b - V_c^{\pi_k}) + \phi'(b-V_c^{\pi_k})\mathbb{A}_{c}^{\pi_k}(\pi)
\end{equation}
which is obtained by the substitution $V_c(\pi)-V_c^{\pi_k}\to\mathbb{A}_{c}^{\pi_{k}}(\pi)$ in $D_\phi$.

When we center this divergence around policy $\pi_k$ and keep this policy fixed, it becomes a function of the policy cost advantage.

\begin{align*}
        \bar D_\phi(\pi_{\theta} || \pi_{\theta_k}) &= \phi(b - V_c^{\pi_k} - \mathbb{A}_{c}^{\pi_k}(\pi)) - \phi(b - V_c^{\pi_k}) + \phi'(b-V_c^{\pi_k})\mathbb{A}_{c}^{\pi_k}(\pi) \\
        &= \phi(\delta_b - \mathbb{A}_{c}^{\pi_k}(\pi)) - \phi(\delta_b) + \phi'(\delta_b)\mathbb{A}_{c}^{\pi_k}(\pi) \\
        &= \Psi(\mathbb{A}_{c}^{\pi_k}).
\end{align*}

Note that $\bar D_\phi(\pi_{\theta} || \pi_{\theta_k}) = \Psi(\mathbb{A}_c^{\pi_k}(\pi))$, where $\Psi(x) = \phi(\delta_b - x) - \phi(\delta_b) - \phi'(\delta_b)\cdot x$ is a (strictly) convex function if $\phi$ is (strictly) convex, since it is equivalent to the one-dimensional Bregman divergence $D_\phi(\delta_b-x||\delta_b)$ on the domain of $\phi(b-x)$, see Figure \ref{fig:surr-div-advantage}.

\begin{figure}[ht]
    \centering
    \includegraphics[width=0.5\linewidth]{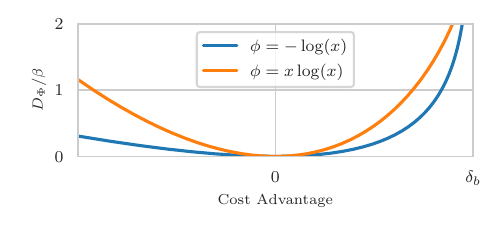}
    \caption{The surrogate Constrained KL-Divergence as a function of the policy cost advantage.}
    \label{fig:surr-div-advantage}
\end{figure}

\begin{example}
    The function $\phi(x) = x\log(x)$ induces the divergence
    \begin{equation}
        \bar D_\phi(\pi_{\theta} || \pi_{\theta_k}) = \mathbb{A}_c^{\pi_k}(\pi_\theta) - (\delta_b-\mathbb{A}_c^{\pi_k}(\pi_\theta))\log\left(\frac {\delta_b} {\delta_b-\mathbb{A}_c^{\pi_k}(\pi_\theta))}\right). \\
    \end{equation}
\end{example}

\subsubsection{Estimation} 
In the practical implementation, the expected KL-divergence between the policy of the previous iteration, $\pi_k$, and the proposal policy $\pi$ is estimated from state samples $s_i$ by running $\pi_k$ in the environment
\begin{equation}
    \sum_s d_{\pi_k}(s) D_{\operatorname{KL}}(\pi(\cdot|s)||\pi_{k}(\cdot|s))
 \approx 1/N \sum_{i=0}^{N-1} D_{\operatorname{KL}}({\pi}(\cdot|s_i)||{\pi_{k}}(\cdot|s_i))
\end{equation}
where $D_{\operatorname{KL}}$ can be computed in closed form for Gaussian policies, where $N$ is the batch size. 

For the constraint term, we estimate $\delta_b$ from trajectory samples, as well as the policy cost advantage 
\begin{equation}
    \mathbb{A}_c^{\pi_k}(\pi) \approx \mathbb{\hat A} = \frac{1}{N} \sum_{i=0}^{N-1} \frac{\pi(a_i|s_i)}{\pi_k(a_i|s_i)} \hat A^{\pi_k}_i
\end{equation}
where $\hat A^{\pi_k}_i$ is the GAE-$\lambda$ estimate~\citep{schulman2018highdimensionalcontinuouscontrolusing} of the advantage function of the cost. 
For any suitable $\phi$, the resulting divergence estimate is
\begin{align}
    \hat D_\phi & = \phi(\delta_b-\mathbb{\hat A}) - \phi(\delta_b) - \phi'(\delta_b)\mathbb{\hat A}
\end{align}

and for the specific choice $\phi(x)=x\log(x)$

\begin{equation}
    \hat D_\phi = \mathbb{\hat A} - (\delta_b-\mathbb{\hat A})\log\left(\frac {\delta_b} {\delta_b-\mathbb{\hat A}}\right).
\end{equation}

\subsubsection{Details on C-NPG}\label{appendix:divergence-quadratic}
In showing that TRPO with quadratic approximation agrees with a natural gradient step, see \Cref{appendix:extended-background}, we have used that 
$\nabla_\theta \mathbb{A}_r^{\theta_k}(\theta)|_{\theta=\theta_k}=\nabla V_r(\theta_k)$, which holds although $\mathbb{A}_r$ is only a proxy of $V_r$. 
We now provide a similar property for the quadratic approximation of the surrogate divergences $\bar{D}_\C$. 

\begin{proposition}
    For any parameter $\theta$ with $\pi_\theta\in\Pi_{\operatorname{safe}}$ it holds that 
    \begin{align}
        \nabla_\theta^2 \bar{D}_\phi(\theta||\hat{\theta})|_{\theta=\hat{\theta}} = \nabla_\theta^2 {D}_\phi(\theta||\hat{\theta})|_{\theta=\hat{\theta}} 
    \end{align}
    and hence 
    \begin{align}
        \nabla_\theta^2 \bar{D}_{\textup{KL}}(\theta||\hat{\theta})|_{\theta=\hat{\theta}} + \beta \nabla_\theta^2 \bar{D}_\phi(\theta||\hat{\theta})|_{\theta=\hat{\theta}} = G_\C(\hat{\theta}) 
    \end{align}
    where $G_\C(\theta)$ denotes the Gramian matrix of C-NPG with entries 
    \begin{align}
        G_\C({\theta})_{ij} = \partial_{\theta_i} d_\theta^\top \nabla^2\Phi_\C(\theta) \partial_{\theta_j} d_\theta. 
    \end{align}
\end{proposition}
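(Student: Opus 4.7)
The plan is to exploit the fact that both $D_\phi(\theta||\hat\theta)$ and $\bar D_\phi(\theta||\hat\theta)$ share the common form $\Psi(w(\theta))$, where $\Psi(x) \coloneqq \phi(\delta_b - x) - \phi(\delta_b) + \phi'(\delta_b)\,x$ with $\delta_b = b - V_c(\pi_{\hat\theta})$ is the same scalar convex function in both cases, and only the inner map $w$ differs: for $D_\phi$ we have $w(\theta) = V_c(\pi_\theta) - V_c(\pi_{\hat\theta})$, whereas for $\bar D_\phi$ we have $w(\theta) = \mathbb{A}_c^{\hat\theta}(\pi_\theta)$. Crucially, by the same policy-gradient identity used earlier for the reward, namely $\nabla_\theta \mathbb{A}_r^{\hat\theta}(\pi_\theta)|_{\hat\theta} = \nabla V_r(\hat\theta)$, both choices satisfy $w(\hat\theta) = 0$ and $\nabla_\theta w(\hat\theta) = \nabla V_c(\hat\theta)$.

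For the first identity I would simply apply the chain rule to $\Psi(w(\theta))$ at $\theta=\hat\theta$. One checks that $\Psi'(0) = 0$ and $\Psi''(0) = \phi''(\delta_b)$, so the $\nabla_\theta^2 w(\hat\theta)$-term in the chain rule drops out and the Hessian collapses to the rank-one matrix $\phi''(\delta_b)\,\nabla V_c(\hat\theta)\,\nabla V_c(\hat\theta)^\top$. Since this quantity depends on $w$ only through $\nabla_\theta w(\hat\theta)$, and this gradient coincides for $D_\phi$ and $\bar D_\phi$, the two Hessians at the base point agree.

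For the second identity I would invoke the standard Bregman Hessian identity: for any smooth strictly convex $\Phi\colon\operatorname{int}(\cD)\to\mathbb R$, a Taylor expansion of $D_\Phi(d_\theta||d_{\hat\theta})$ around $\theta=\hat\theta$ gives $\nabla_\theta^2 D_\Phi|_{\hat\theta} = \partial_\theta d_\theta^\top \nabla^2\Phi(d_{\hat\theta})\,\partial_\theta d_\theta|_{\hat\theta}$, since the first-order terms in $d_\theta - d_{\hat\theta}$ vanish at the base point. Applying this to $\Phi_\C = \Phi_\K + \beta\,\phi(b - c^\top\cdot)$, and using $\nabla^2_d[\phi(b - c^\top d)] = \phi''(b - c^\top d)\,cc^\top$ together with $\partial_{\theta_i}V_c(\theta) = c^\top\partial_{\theta_i}d_\theta$, decomposes $G_\C(\hat\theta)$ as $G_\K(\hat\theta) + \beta\,\phi''(\delta_b)\,\nabla V_c(\hat\theta)\,\nabla V_c(\hat\theta)^\top$. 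The second summand equals $\beta$ times the Hessian from Step~1, and the first summand is $\nabla_\theta^2 \bar D_{\operatorname{KL}}(\theta||\hat\theta)|_{\hat\theta}$ by the standard identification of the Hessian of the surrogate KL at the base point with the Fisher information $G_\K$.

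The main subtlety is this last claim about $\bar D_{\operatorname{KL}}$: because it uses the state distribution of $\pi_{\hat\theta}$ rather than $\pi_\theta$, it is not literally the pullback of a Bregman divergence on $\cD$, so the generic Bregman-Hessian formula does not apply verbatim. However, the difference between $\bar D_{\operatorname{KL}}$ and the genuine Kakade divergence $D_\K$ involves a factor of $d_{\pi_\theta}(s) - d_{\pi_{\hat\theta}}(s)$ multiplied by a per-state KL divergence that already vanishes to second order at $\theta=\hat\theta$; hence the two divergences differ by terms of order $\|\theta-\hat\theta\|^3$ and have identical Hessians at the base point. This is the one step that requires genuine care; all the rest is a short chain-rule computation.
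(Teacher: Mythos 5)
Your proposal is correct and follows essentially the same route as the paper: a chain-rule expansion of $\Psi\circ w$ at the base point, where $\Psi'(0)=0$ kills the $\nabla^2_\theta w$ term and $\Psi''(0)=\phi''(\delta_b)$ leaves the rank-one matrix $\phi''(\delta_b)\nabla V_c(\hat\theta)\nabla V_c(\hat\theta)^\top$, which is then matched against the barrier part of $G_\C$. The only difference is that you explicitly justify $\nabla^2_\theta\bar D_{\operatorname{KL}}|_{\theta=\hat\theta}=G_\K(\hat\theta)$ via the third-order agreement of $\bar D_{\operatorname{KL}}$ with the Kakade divergence, a step the paper simply delegates to a citation of \cite{schulman2017trust}.
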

\begin{proof}
Let $\bar H_{\operatorname{KL}}(\theta) = \nabla^2_\theta \bar D_{\operatorname{KL}}(\theta||\hat\theta)|_{\theta=\hat\theta}$ and $\bar H_\phi(\theta) = \nabla^2_\theta \bar D_\phi(\theta||\hat\theta)|_{\theta=\hat\theta}$. 
One can show that $\bar{H}_{\textup{KL}} = G_\K(\theta)$~\citep{schulman2017trust}.
Further, we have 
\begin{align*}
    \bar H_\phi(\theta) & 
    = \nabla_\theta \mathbb{A}_c^{\pi_k}(\theta) \Psi''(\mathbb{A}_c^{\pi_k}(\theta)) \nabla_\theta \mathbb{A}_c^{\pi_k}(\theta)^\top + \Psi'(\mathbb{A}_c^{\pi_k}(\theta)) \nabla^2_\theta \mathbb{A}_c^{\pi_k}(\theta) 
    \\ & \overset{a)}{=} \nabla_\theta \mathbb{A}_c^{\pi_k}(\theta) \Psi''(\mathbb{A}_c^{\pi_k}(\theta)) \nabla_\theta \mathbb{A}_c^{\pi_k}(\theta) ^\top
    \\ & \overset{b)}{=} \nabla_\theta \mathbb{A}_c^{\pi_k}(\theta) \phi''(b-V_c^{\pi_k}(\theta)) \nabla_\theta \mathbb{A}_c^{\pi_k}(\theta)^\top
    \\ & = \nabla_\theta V_c^{\pi_k}(\theta) \phi''(b-V_c^{\pi_k}(\theta)) \nabla_\theta V_c^{\pi_k}(\theta)^\top,
\end{align*}
where a) follows from $\Psi'(\mathbb{A}_c^{\pi_k}(\theta)) = 0$ since $\Psi(0)=0$, $\Psi\ge0$ and $\mathbb{A}_c^{\hat\theta}(\theta)|_{\theta=\hat\theta} = 0$. Further, b) follows because $\Psi''(x)|_{x=0} = \phi''(\delta_b)$.
Thus, $\bar H_\phi$ is equivalent to the Gramian
\begin{align}
    G_\C(\theta)_{ij} & \coloneqq \partial_{\theta_i} d_\theta^\top \nabla^2\Phi_\C(\theta) \partial_{\theta_j} d_\theta \\
    & = G_\K(\theta)_{ij} 
    +  \beta \phi''(b - c_k^\top d_\theta) \partial_{\theta_i} d_\theta ^\top c c^\top \partial_{\theta_i} d_\theta\\
    & = \bar H_{\operatorname{KL}}
    + \beta \nabla_\theta V_{c}(\theta) \phi''(b-V_{c}(\theta)) \nabla_\theta V_{c}(\theta)^\top,\\
    & = \bar H_{\operatorname{KL}}
    + \beta \bar H_{\phi}.
\end{align}
Again, for multiple constraints, the statement follows analogously. 
\end{proof}

In particular, this shows that the C-TRPO update can be interpreted as a natural policy gradient step with an adaptive step size and that the updates with $D_\C$ and $\bar D_\C$ are equivalent if we use a quadratic approximation for both, justifying $\bar D_\C$ as a surrogate for $D_\C$. 

\subsection{Beyond finite MDPs}\label{app:subsec:beyond}
For the sake of simplicity and as this is required for our theoretical analysis, we have introduced C-TRPO only for finite MDPs. 
However, C-TRPO can also be used for problems with continuous state and action spaces as we discuss here. 
In this case, the state-action and state distributions are defined as 
\begin{align*}
    d_\pi(S\times A) & \coloneqq (1-\gamma)
    \sum_{t=0}^\infty \gamma^t \mathbb{P}_\pi (s_t \in S, a_t\in A) \quad \text{and } \\ 
    d_\pi(S) & \coloneqq (1-\gamma)
    \sum_{t=0}^\infty \gamma^t \mathbb{P}_\pi (s_t \in S) 
\end{align*}
for any measurable subsets $A\subseteq\mathcal{A}$ and $S\subseteq\bS$. 
Further, the Kakade divergence is then given by 
\begin{align}
    D_{\operatorname{K}}(d^{\pi_1}||d^{\pi_2}) & \coloneqq \mathbb{E}_{s\sim d^{\pi_1}} \big[D_{\operatorname{KL}}(\pi_1(\cdot|s)||\pi_2(\cdot|s))\big],
\end{align}
which is well defined if $\pi_1(\cdot|s)$ is absolutely continuous with respect to $\pi_2(\cdot|s)$ for $d^{\pi_1}$ almost all $s\in\bS$. 
The Bregman divergence that C-TRPO builds on is -- just as in the finite case -- given by 
\begin{equation}
        D_{\C}(d_{1}||d_{2})  = D_{\operatorname{K}}(d_{1}||d_2)
        + \sum_{i=1}^m \beta_i D_{\phi_i}(d_{1}||d_{2}),
\end{equation}
where 
\begin{align}
    \begin{split}
        D_{\phi_i}(d_1||d_2) = 
     & \, \phi(b_i - V_{c_i}(\pi_1)) - \phi(b_i - V_{c_i}(\pi_2)) 
     + \phi'(b_i-V_{c_i}(\pi_2))(V_{c_i}(\pi_1) - V_{c_i}(\pi_2)).
    \end{split}
\end{align}
Like in the finite case, the policy advantage is defined as 
\begin{align}
    \mathbb{A}_r^{\pi_k}(\pi) = \mathbb{E}_{s,a\sim d_{\pi_k}}\left[\frac{\pi(a|s)}{\pi_k(a|s)}A_r^{\pi_k}(s,a)\right],
\end{align}
where $A_r^{\pi}(s,a) = Q^\pi(s,a)-V^\pi(s)$ denotes the advantage function, which is defined analoguously to the finite case. 
Now, the plain trust region update is given by
\begin{align}
    \theta_{k+1} \in \argmax_{\theta}     \mathbb{A}_r^{\pi_k}(\pi) \quad \text{ sbj. to } D_{\textup{C}}(d_{\pi_k}||d_{\pi}) \le \delta. 
\end{align}
Just like in the finite case, we use a surrogate divergence $\bar{D}_{\textup{C}}$ 
and obtain the formulation of C-TRPO 
\begin{equation}
    \pi_{k+1} = \argmax_{\pi\in\Pi} \mathbb{A}_r^{\pi_k}(\pi)\quad \text{sbj. to } 
    \bar D_\C(\pi || \pi_{k}) \le \delta.
\end{equation}
Here, the differences to $D_\C$ are that we use samples from the state distribution $d^{\pi_k}$ and use a surrogate for the cost advantage to estimate the divergence $D_{\phi_i}$ as described in \Cref{subsec:implementation}. 
Further, we use a parametric policy model $\pi_\theta$ and a linear approximation of $\mathbb A^{\pi_k}$ as well as quadratic approximation of $\bar D_\C(\pi || \pi_{k})$ for our practical implementation. 

\paragraph{Expression for Gaussian policies}

We test C-TRPO in various control tasks where we use Gaussian policies. 
More precisely, the state and action space consist of Euclidean spaces $\bS = \mathbb R^{d_{\textup{s}}}$ and $\A = R^{d_{\textup{a}}}$. 
Then, we consider a policy network $\mu_\theta\colon \bS\to\A$, which predicts the mean action and assume parameterized but state independent diagonal Gaussian noise, meaning that $\pi_\theta(\cdot|s) = \mathcal N(\mu_\theta(s), \Sigma_\theta)$, where $\Sigma_{\theta}$ is diagonal. 
Consequently, we can use a closed-form expression for the KL divergence as 
\begin{equation*}
    \KL(\pi_{\theta_1}(\cdot|s)||\pi_{\theta_2}(\cdot|s))={\frac {1}{2}}\left(\operatorname {tr} \left(\Sigma_{\theta_2}^{-1}\Sigma _{\theta_1}\right)-d_{\textup{a}}+\lVert \mu_{\theta_1}(s) - \mu_{\theta_2}(s) \rVert_{\Sigma _{\theta_2}^{-1}}^2+\ln \left({\frac {\det \Sigma_{\theta_2}}{\det \Sigma _{\theta_1}}}\right)\right),
\end{equation*}
see~\cite{zhang2024properties}. 

\section{Proofs of \Cref{sec:analysis-C-TRPO}}\label{app:sec:proofs-theory}

\subsection{Proofs of \Cref{subsec:analysis}}\label{app:subsec:performance-guarantees}

Our theoretical analysis of C-TRPO is built on the following bounds on the performance difference of two policies.

\begin{theorem}[Performance Difference,~\cite{achiam2017constrained}]\label{th:achiams-bound}
    For any function $f(s,a)$, the following bounds hold
    \begin{equation}
         V_f(\pi_1) - V_f(\pi_2) \lesseqgtr \mathbb{A}^{\pi_2}_f(\pi_1) \pm \frac{2\gamma \epsilon_f}{(1-\gamma)}\sqrt{\frac{1}{2}\mathbb{E}_{s\sim d_{\pi_2}}
         D_{\operatorname{KL}}(\pi_1(\cdot|s)||\pi_2(\cdot|s))
         }
    \end{equation}
    where $\epsilon_f = \max_s |\mathbb{E}_{a\sim\pi_1} A^{\pi_2}_{f}(s,a)|$.
\end{theorem}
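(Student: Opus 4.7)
The plan is to start from the exact performance difference identity
\begin{equation*}
V_f(\pi_1) - V_f(\pi_2) = \mathbb{E}_{(s,a)\sim d_{\pi_1}}[A_f^{\pi_2}(s,a)],
\end{equation*}
which, in the $(1-\gamma)$-normalized occupancy convention used in the paper, is the standard telescoping identity obtained by unrolling the Bellman equation for $V_f^{\pi_2}$ along trajectories generated by $\pi_1$. By importance reweighting, the surrogate advantage equals $\mathbb{A}_f^{\pi_2}(\pi_1) = \mathbb{E}_{s\sim d_{\pi_2},\,a\sim \pi_1}[A_f^{\pi_2}(s,a)]$. Subtracting reduces the error to a mismatch purely in the state occupancies:
\begin{equation*}
V_f(\pi_1) - V_f(\pi_2) - \mathbb{A}_f^{\pi_2}(\pi_1) = \sum_{s}\bigl(d_{\pi_1}(s) - d_{\pi_2}(s)\bigr)\,\bar A(s),
\end{equation*}
where $\bar A(s) \coloneqq \mathbb{E}_{a\sim \pi_1}[A_f^{\pi_2}(s,a)]$. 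Hölder's inequality then bounds the right-hand side by $\|d_{\pi_1} - d_{\pi_2}\|_1 \cdot \|\bar A\|_\infty = \epsilon_f\,\|d_{\pi_1} - d_{\pi_2}\|_1$.

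The central step is to control the $L^1$ distance between the two state occupancies by the one-step policy divergences. I would expand the occupancies as Neumann series, $d_{\pi_i}^\top = (1-\gamma)\mu^\top(I-\gamma P_{\pi_i})^{-1}$ with $P_{\pi_i}(s'|s) = \sum_a \pi_i(a|s)P(s'|s,a)$, and apply the resolvent identity
\begin{equation*}
(I-\gamma P_{\pi_1})^{-1} - (I-\gamma P_{\pi_2})^{-1} = \gamma\,(I-\gamma P_{\pi_1})^{-1}(P_{\pi_1} - P_{\pi_2})(I-\gamma P_{\pi_2})^{-1}.
\end{equation*}
Row-wise one has $\|P_{\pi_1}(\cdot|s) - P_{\pi_2}(\cdot|s)\|_1 \le \|\pi_1(\cdot|s) - \pi_2(\cdot|s)\|_1$, and both resolvents act as sub-stochastic contractions in total variation. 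Combining these pieces yields the occupancy-shift lemma
\begin{equation*}
\|d_{\pi_1} - d_{\pi_2}\|_1 \le \frac{\gamma}{1-\gamma}\,\mathbb{E}_{s\sim d_{\pi_2}}\bigl[\|\pi_1(\cdot|s) - \pi_2(\cdot|s)\|_1\bigr].
\end{equation*}
This is the main technical obstacle: care is needed to obtain the sharp constant $\gamma/(1-\gamma)$ (rather than a naive $1/(1-\gamma)$), and, crucially, to land on a \emph{mean} over $s\sim d_{\pi_2}$ on the right-hand side rather than a worst-case supremum, since this is what enables a clean use of Jensen's inequality at the end.

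The last step is purely pointwise and probabilistic. Pinsker's inequality gives $\|\pi_1(\cdot|s) - \pi_2(\cdot|s)\|_1 \le \sqrt{2\,D_{\operatorname{KL}}(\pi_1(\cdot|s)\|\pi_2(\cdot|s))}$, and applying Jensen's inequality to the concave square root yields
\begin{equation*}
\mathbb{E}_{s\sim d_{\pi_2}}\!\sqrt{2\,D_{\operatorname{KL}}(\pi_1(\cdot|s)\|\pi_2(\cdot|s))} \le \sqrt{2\,\mathbb{E}_{s\sim d_{\pi_2}}D_{\operatorname{KL}}(\pi_1(\cdot|s)\|\pi_2(\cdot|s))}.
\end{equation*}
Chaining the three bounds (Hölder, the occupancy-shift lemma, and Pinsker+Jensen) produces
\begin{equation*}
|V_f(\pi_1) - V_f(\pi_2) - \mathbb{A}_f^{\pi_2}(\pi_1)| \le \frac{2\gamma\,\epsilon_f}{1-\gamma}\sqrt{\tfrac{1}{2}\,\mathbb{E}_{s\sim d_{\pi_2}} D_{\operatorname{KL}}(\pi_1(\cdot|s)\|\pi_2(\cdot|s))},
\end{equation*}
which is exactly the claimed two-sided bound.
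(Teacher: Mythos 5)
Your proposal is correct in substance but takes a more self-contained route than the paper. The paper does not prove this statement from scratch: in Appendix \ref{appendix:divergence-performance-bounds} it simply cites the total-variation form of the bound, $V_f(\pi_1)-V_f(\pi_2) \lesseqgtr \mathbb{A}_f^{\pi_2}(\pi_1) \pm \frac{2\gamma\epsilon_f}{1-\gamma}\,\mathbb{E}_{s\sim d_{\pi_2}}D_{\operatorname{TV}}(\pi_1\|\pi_2)(s)$, from Achiam et al.\ and earlier works, and then performs exactly your last step (Pinsker followed by Jensen on the concave square root) to pass from $D_{\operatorname{TV}}$ to the averaged KL. You instead derive that TV-form bound yourself via the performance difference identity, a H\"older split isolating the state-occupancy mismatch, and an occupancy-shift lemma proved through the resolvent identity; this buys a fully elementary, citation-free argument and makes explicit where the constant $\gamma/(1-\gamma)$ and the $d_{\pi_2}$-average come from, at the cost of redoing what the cited reference already establishes. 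One small correction: the resolvent factorization as you wrote it, $(I-\gamma P_{\pi_1})^{-1}(P_{\pi_1}-P_{\pi_2})(I-\gamma P_{\pi_2})^{-1}$, puts $(I-\gamma P_{\pi_1})^{-1}$ adjacent to $\mu^\top$ and therefore produces $\mathbb{E}_{s\sim d_{\pi_1}}\bigl[\|\pi_1(\cdot|s)-\pi_2(\cdot|s)\|_1\bigr]$ on the right-hand side; to land on the $d_{\pi_2}$-average that the theorem requires you need the transposed factorization $(I-\gamma P_{\pi_2})^{-1}(P_{\pi_1}-P_{\pi_2})(I-\gamma P_{\pi_1})^{-1}$, which is equally valid and makes the rest of your chain go through verbatim.
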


\Cref{th:achiams-bound} can be interpreted as a bound on the error incurred by replacing the difference in returns $V_f(\pi_1) - V_f(\pi)$ of any state-action function by its policy advantage $\mathbb{A}_f^{\pi_2}(\pi_1)$.

\propimprovement*
\begin{proof}
    It follows from the lower bound in \Cref{th:achiams-bound} that
    \begin{equation}
         V_r(\pi_{k+1}) - V_r(\pi_k) \geq \mathbb{A}^{\pi_k}_r(\pi_{k+1}) - \frac{\gamma \epsilon_r}{(1-\gamma)}\sqrt{2\bar D_\C(\pi_{k+1}||\pi_k)}
    \end{equation} where we choose $f = r$. The bound holds because $\bar D_\phi \geq 0$, and thus $\bar D_\C \geq \mathbb{E} D_{\operatorname{KL}}$. Further, $\delta \geq D_\C$ and $\mathbb{A}^{\pi_k}_r(\pi_{k+1})\geq0$ by the update equation, which concludes the proof.
    See Appendix~\ref{appendix:divergence-performance-bounds} for a more detailed discussion.
\end{proof}

\CTRPOvsCPO*
\begin{proof}
Let us fix a strictly safe policy $\pi_0\in\operatorname{int}(\Pi_{\textup{safe}})$. 
In both cases, we approximate the expected cost of a policy using $V_c(\pi)\approx V_c(\pi_0)+\mathbb{A}_c^{\pi_0}(\pi)$, which is off by the advantage mismatch term in \Cref{prop:C-TRPO-improvement}. 
Hence, we maximize the surrogate of the expected value $\mathbb{A}^{\pi_0}_r(\pi)$ over the regions 
\begin{align*}
    P_{\textup{CPO}} \coloneqq \{ \pi\in\Pi : \bar D_\K(\pi, \pi_0) \le \delta, \, V_c(\pi_0)+\mathbb{A}_c^{\pi_0}(\pi)\leq b \}
\end{align*}
in the case of CPO, and
\begin{align*}
    P_{\beta} \coloneqq \{ \pi\in\Pi : \bar D_\C(\pi, \pi_0) \le \delta \} , 
\end{align*}
with C-TRPO for some $\beta>0$.
Note that
\begin{align}
    \bar D_\C(\pi, \pi_0) = \bar D_\K(\pi, \pi_0) + \beta \Psi(\mathbb{A}_c^{\pi_0}(\pi)), 
\end{align}
and $\Psi\colon(-\infty, \delta_b)\to(0,+\infty)$ and $\Psi(t)\to+\infty$ for $t\nearrow \delta_b$, where $\delta_b = b-V_c(\pi_0)$. 
Denote the corresponding updates by $\hat\pi_{\textup{CPO}}$ and the C-TRPO update by $\hat\pi_\beta$. 
Note that we have $P_\beta\subseteq P_{\beta'} \subseteq P_{\operatorname{CPO}}$ for $\beta\ge \beta'$. 
Further, we have 
    \begin{align*}
        \bigcup_{\beta>0} P_\beta 
        = \{ \pi\in P
        : D_\K(\pi, \pi_0) < \delta, V_c(\pi_0)+\mathbb{A}_c^{\pi_0}(\pi)< b \}. 
    \end{align*}
    Hence, the trust regions $P_\beta$ grow for $\beta\searrow0$ and fill the interior of the trust region $P_{\textup{CPO}}$. 
\end{proof}

\begin{remark}
Intuitively, one could repeatedly solve the C-TRPO problem with successively smaller values of $\beta$, which would be similar to solving CPO with the interior point method using $\Psi$ as the barrier function.
\end{remark}

\CTRPOworstcase*
\begin{proof}
    Setting $f=c$ in the upper bound from Theorem \ref{th:achiams-bound} we obtain 
    \begin{align*}
        V_c(\pi_{k+1}) \le V_c(\pi_{k}) + \mathbb{A}^{\pi_k}_c(\pi_{k+1}) \pm \frac{2\gamma \epsilon_c}{(1-\gamma)}\sqrt{\frac{1}{2}\mathbb{E}_{s\sim d_{\pi_k}}D_{\operatorname{KL}}(\pi_{k+1}(\cdot|s)||\pi_k(\cdot|s))
         }
    \end{align*}
    Note now that we have 
    \begin{align*}
        \mathbb{E}_{s\sim d_{\pi_k}} D_{\operatorname{KL}}(\pi_{k+1}(\cdot|s)||\pi_k(\cdot|s)) = \overline{D}_{\C}(\pi_{k+1}||\pi_k) - \beta \overline{D}_\phi(\pi_{k+1}, \pi_k) \le \delta - \beta \overline{D}_\phi(\pi_{k+1}, \pi_k) = \delta(\beta). 
    \end{align*}   
\end{proof}

\subsection{Details on the results in \Cref{sec:s-npg-analysis}}
\label{app:sec:CNPG}

Recall that we study the natural policy gradient flow 
\begin{align}\label{app:eq:NPG-flow-parameters}
    \partial_t \theta_t = G_\C(\theta_t)^+ \nabla V_r(\theta_t),
\end{align}
where $G_\C(\theta)^+$ denotes a pseudo-inverse of $G_\C(\theta)$ with entries 
\begin{align}
    \begin{split}
    G_\C(\theta)_{ij} & \coloneqq 
    \partial_{\theta_i}d_\theta^\top \nabla^2 \Phi_\C(d_\theta) \partial_{\theta_j} d_\theta = G_\K(\theta)_{ij}
    + \sum_k \beta_k \phi''(b_k - c_k^\top d_\theta) \partial_{\theta_i} d_\theta ^\top c_k c_k^\top \partial_{\theta_i} d_\theta. 
    \end{split}
\end{align}
and $\theta\mapsto\pi_\theta$ is a differentiable policy parametrization. 

Moreover, we assume that $\theta\mapsto \pi_\theta$ is regular, that it is 
surjective and the Jacobian is of maximal rank everywhere. 
This assumption implies overparametrization but is satisfied for common models like tabular softmax, tabular escort, or expressive log-linear policy parameterizations~\citep{agarwal2021theory,mei2020escaping,muller2023geometry}. 

We denote the set of safe parameters by $\Theta_{\operatorname{safe}}\coloneqq \{\theta\in\R^p : \pi_\theta\in\Pi_{\operatorname{safe}} \}$, which is non-convex in general and say that $\Theta_{\operatorname{safe}}$ is \emph{invariant} under \Cref{eq:NPG-flow-parameters} if $\theta_0\in\Theta_{\operatorname{safe}}$ implies $\theta_t\in\Theta_{\operatorname{safe}}$ for all $t$. 
Invariance is associated with safe control during optimization and is typically achieved via control barrier function methods~\citep{Ames_2017, cheng2019endtoendsaferl}. 
We study the evolution of the state-action distributions $d_t = d^{\pi_{\theta_t}}$ as this allows us to employ the linear programming formulation of CMPDs 
and we obtain the following convergence guarantees. 

\thminvariance*
\begin{proof}
Consider a solution $(\theta_t)_{t>0}$ of \Cref{app:eq:NPG-flow-parameters}. 
As the mapping $\pi\mapsto d^\pi$ is a diffeomorphism~\citep{muller2023geometry}  the parameterization $\Theta_{\operatorname{safe}}\to \cD_{\operatorname{safe}}, \theta\mapsto d^{\pi_\theta}$ is surjective and has a Jacobian of maximal rank everywhere.  
As $G_\C(\theta)_{ij} = \partial_{\theta_i} d_\theta \nabla \Phi_\C \partial_{\theta_i} d_\theta$ this implies that the state-action distributions $d_t = d^{\pi_{\theta_t}}$ solve the Hessian gradient flow with Legendre-type function $\Phi_\C$ and the linear objective $d\mapsto r^\top d$, see~\cite{amari2016information, van2023invariance, muller2023geometry} for a more detailed discussion. 
It suffices to study the gradient flow in the space of state-action distributions $d_t$. 
It is easily checked that $\Phi_{\operatorname{C}}$ is a Legendre-type function for the convex domain $\cD_{\C}$, meaning that it satisfies $\lVert \nabla\Phi(d_n) \rVert\to+\infty$ for $d_n\to d\in\partial\cD_{\operatorname{safe}}$. 
Since the objective is linear, it follows from the general theory of Hessian gradient flows of convex programs that the flow is well posed, see~\cite{alvarez2004hessian,muller2023geometry}. 
\end{proof}

\thmconvergence*
\begin{proof}
    Just like in the proof of \Cref{thm:invariance} we see that $d_t = d^{\pi_{\theta_t}}$ solves the Hessian gradient flow with respect to the Legendre type function $\Phi_\C$. 
    Now the claims regarding convergence and the identification of the limit $\lim_{t\to+\infty} \pi_{\theta_t}$ follows from the general theory of Hessian gradient flows, see~\cite{alvarez2004hessian, muller2024fisher}. 
\end{proof}

\subsection{Performance improvement bounds and choice of divergence}\label{appendix:divergence-performance-bounds}

In a series of works \citep{kakade2002Approximately, pirotta2013safePolicyIteration, schulman2017trust, achiam2017constrained}, the following bound on policy performance difference between two policies has been established.
\begin{equation}
    V_f(\pi ') - V_f(\pi) \lesseqgtr \mathbb{A}^{\pi'}_f(\pi) \pm \frac{2\gamma \epsilon_f}{(1-\gamma)}\mathbb{E}_{s\sim d_{\pi}}D_{\operatorname{TV}}(\pi '||\pi)(s)
\end{equation}

where $D_{\operatorname{TV}}$ is the Total Variation Distance. 
Furthermore, by Pinsker's inequality, we have that
\begin{equation}
D_{\operatorname{TV}}(\pi'||\pi) \leq \sqrt{\frac 1 2 D_{\operatorname{KL}}(\pi'||\pi)},
\end{equation}
and by Jensen's inequality
\begin{equation}
\mathbb{E}_{s\sim d_{\pi}}D_{\operatorname{TV}}(\pi'||\pi)(s) \leq \sqrt{\frac 1 2 \mathbb{E}_{s\sim d_{\pi}}D_{\operatorname{KL}}(\pi'||\pi)(s)},
\end{equation}
It follows that we can not only substitute the KL-divergence into the bound but any divergence
 \begin{equation}
      D_{\Phi}(d_\pi'||d_\pi) \geq \mathbb{E}_{s\sim d_{\pi}}D_{\operatorname{KL}}(\pi'||\pi)(s)
 \end{equation}
 can be substituted, and still retains TRPO's and CPO's update guarantees.

\subsection{Comparison with CPO}\label{app:cpo-comp}

In the approximate case of C-TRPO and CPO, where the reward is approximated linearly, and the trust region quadratically, the constraints differ in that C-TRPO's constraint is
$$
(\theta-\theta_k)(\bar H_{\operatorname{KL}}(\theta)+\beta \bar H_{\phi}(\theta))(\theta-\theta_k)<\delta
$$
whereas CPO's is
$$
(\theta-\theta_k)\bar H_{\operatorname{KL}}(\theta)(\theta-\theta_k)<\delta \text{ and } V_c^{\theta_k} + (\nabla_\theta\mathbb{A}^{\theta_k}_c(\theta))^\top
(\theta-\theta_k) \leq b.
$$

\Cref{fig:CPOvsC-TRPO-conceptual} illustrates the differences between CPO and C-TRPO.

\begin{figure}
    \centering
    \includegraphics[width=1\linewidth]{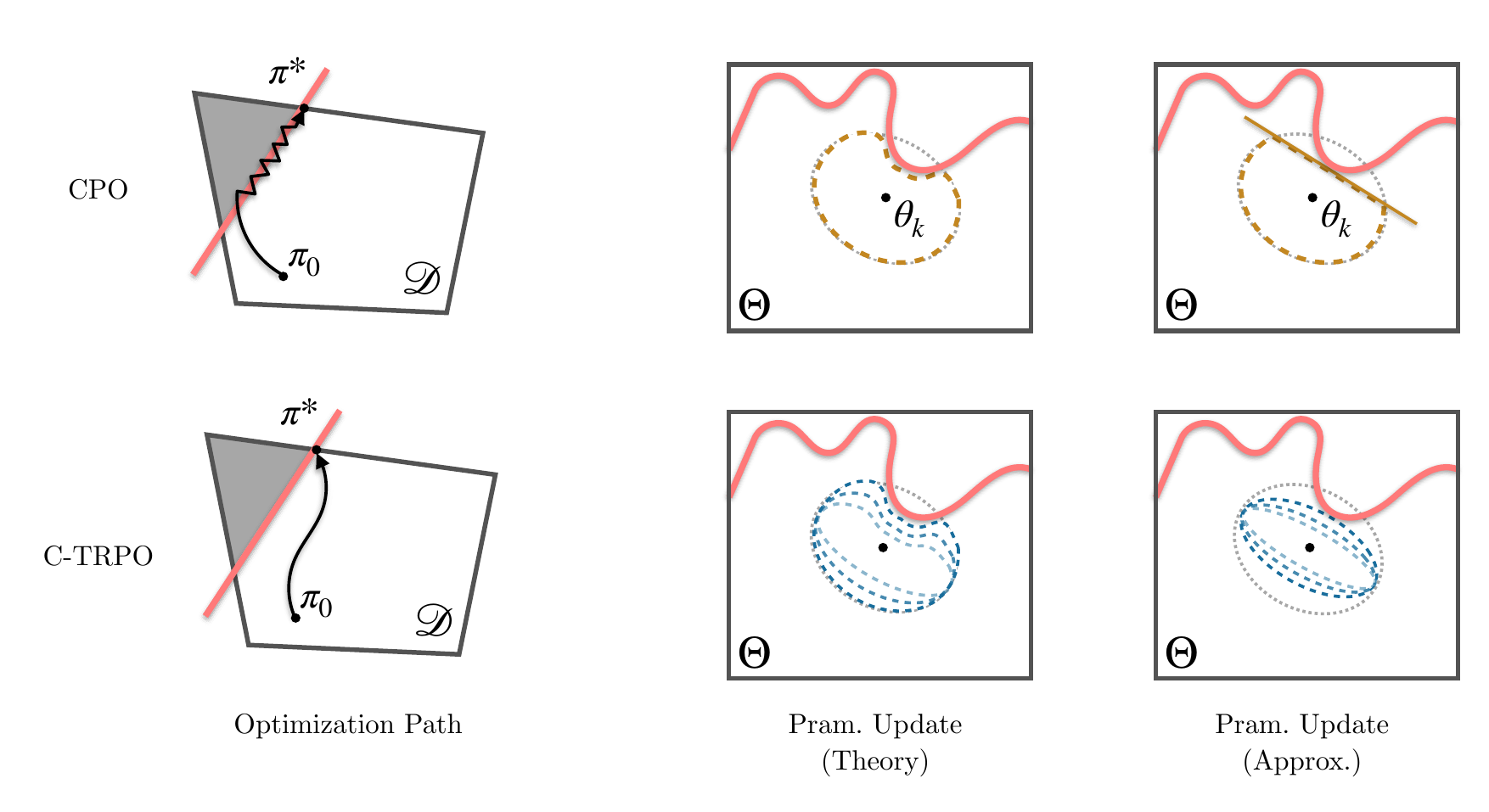}
    \caption{Pictorial illustration of conceptual and practical differences between CPO and C-TRPO. The local approximation of C-TRPO's trust region results in a single quadratic constraint, which is compressed in the direction of the closest cost surface, depending on the hyper-parameter $\beta$ (blue dashed lines on the right). This is in contrast to CPO, where the local approximation of the update results in a quadratic constraint which is not affected by the cost, and a linear constraint which only takes effect upon contact with the cost surface. Intuitively, this results in a smoother optimization path for C-TRPO that remains on the interior of the safe policy space for longer.}
    \label{fig:CPOvsC-TRPO-conceptual}
\end{figure}

\clearpage
\section{Additional Experiments}\label{appendix:additional-experiments}

\subsection{Effect of hyper-parameters}\label{appendix:additional-experiments-hyperparams}

To better understand the effects of the two hyperparameters $\beta$ and $b_\textsc{H}$, we observe how they change the training dynamics through the example of the \textit{AntVelocity} environment.

The safety parameter $\beta$ modulates the stringency with which C-TRPO satisfies the constraint, without limiting the expected return for values up to $\beta=1$, see Figure \ref{fig:c-trpo-hyper}. For higher values, the expected return starts to degrade, partly due to $\bar D_\phi$ being relatively noisy compared to $\bar D_{\operatorname{KL}}$
and thus we recommend the choice $\beta=1$.

\begin{figure}[ht]
    \centering
        \includegraphics[width=\textwidth]{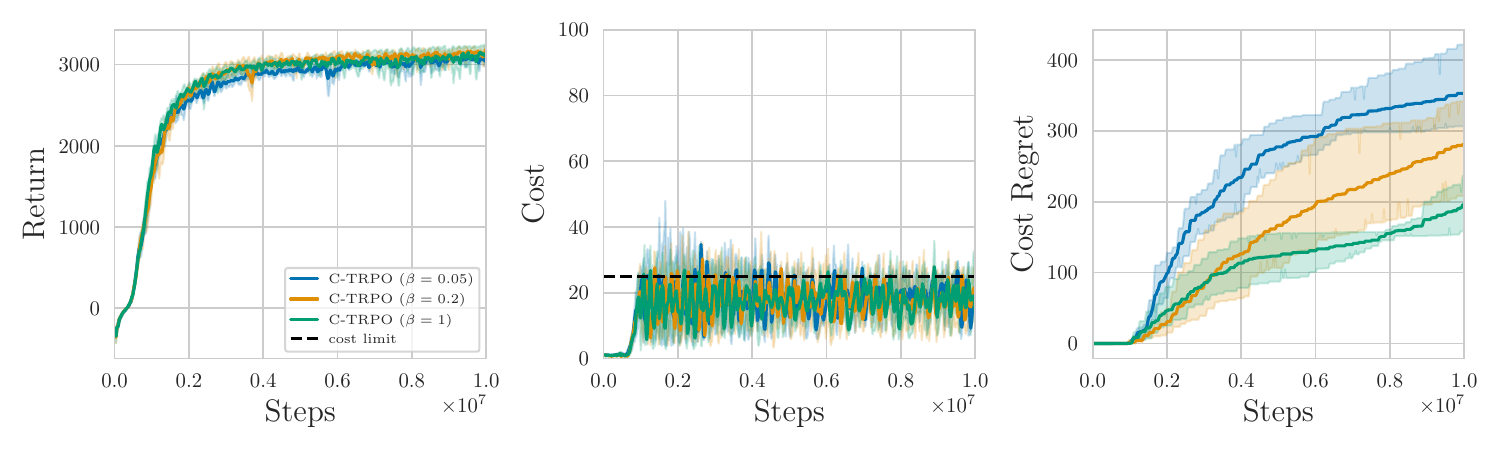}
    \caption{Influence of $\beta$ on C-TRPO's performance.}
    \label{fig:c-trpo-hyper}
\end{figure}

Finally, employing a hysteresis fraction $0<b_{\textsc{H}}<b$ seems beneficial, possibly because it leads the iterate away from the boundary of the safe set, and because divergence estimates tend to be more reliable for strictly safe policies. 
The effect of the choice of $b_\textsc{H}$ is visualized in Figure \ref{fig:c-trpo-hyper-2}. 
\begin{figure}[ht]
    \centering
        \includegraphics[width=\textwidth]{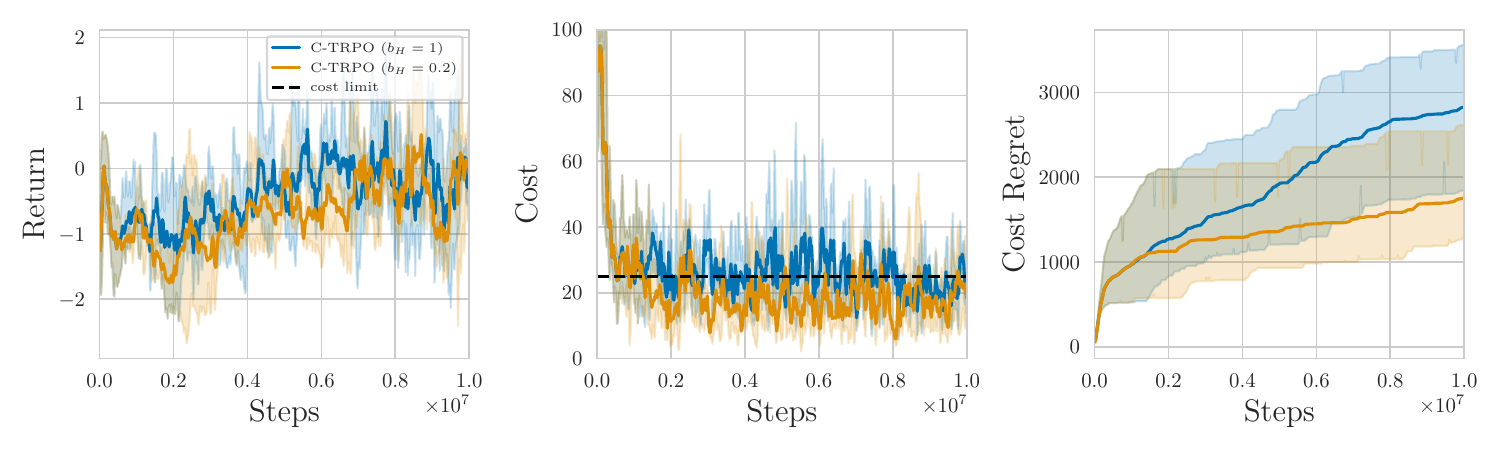}
    \caption{Influence of the hysteresis fraction $b_\textsc{H}$ on C-TRPO's performance.}
    \label{fig:c-trpo-hyper-2}
\end{figure}

\clearpage

\subsection{Ablation Study: CPO vs. C-TRPO}\label{appendix:additional-experiments-ablation}

We conduct an ablation study to rule out that our improvements of C-TRPO over CPO are only due to hysteresis. 
For this, we run both CPO and C-TRPO with and without hysteresis with the same hysteresis parameter as in our other experiments. 
We see that the hysteresis improves safety for both algorithm. 
Further, we find that the hysteresis slightly reduces the return of C-TRPO. 
Overall, we clearly see that C-TRPO itself is much safer compared to CPO as even C-TRPO without hysteresis achieves lower cost regret compared to CPO with hysteresis. 

\begin{figure}[ht]
    \centering
    \includegraphics[width=1\linewidth]{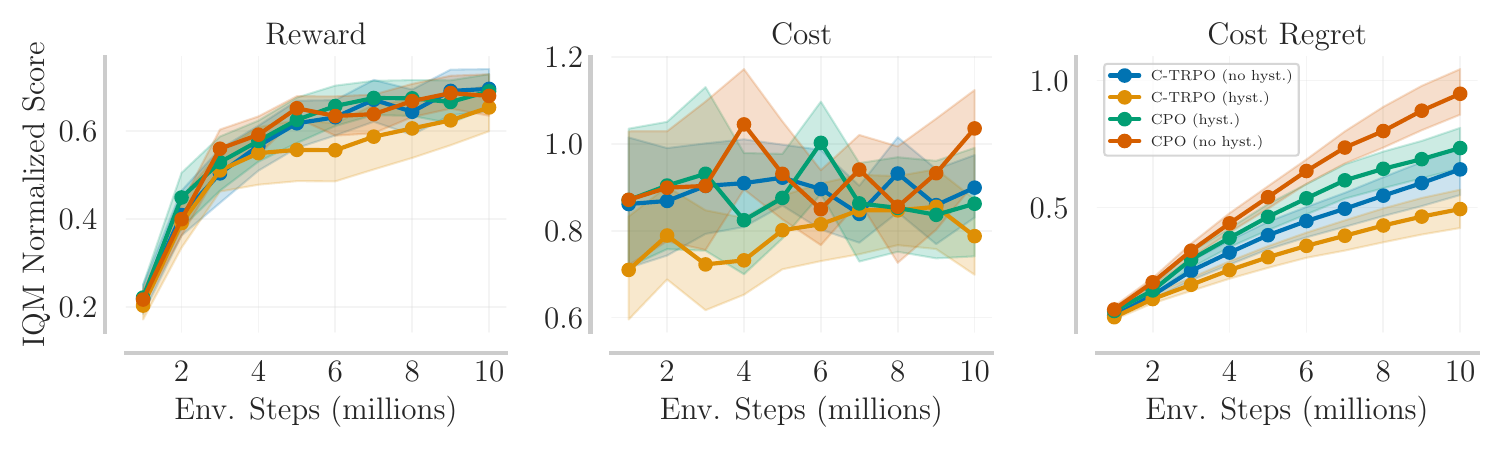}
    \caption{Ablation study on the core components of C-TRPO: Safe trust region (C-TRPO no hyst.) and recovery with hysteresis (CPO hyst.). Evaluation is based on the Inter Quartile Mean (IQM) normalized scores across 5 seeds and 8 tasks. From left to right: episode return of the reward (PPO normalized), episode return of the cost (threshold normalized), and cumulative cost violation (CPO normalized).} 
    \label{fig:c-trpo-hyper-hysteresis}
\end{figure}

\subsection{Noisy Cost Estimates}\label{app:noisy-cost}
To evaluate the sensitivity of C-TRPO to noisy or inaccurate estimates of the cost value function $V_c$, we train multiple policies with C-TRPO on the \emph{AntVelocity} task and corrupt each policy's value estimate by varying levels of noise, i.e. white noise with varying standard deviations $\sigma$, see Figure~\ref{fig:cost-noise}.

\begin{figure}[h]
    \centering
    \includegraphics[width=\linewidth]{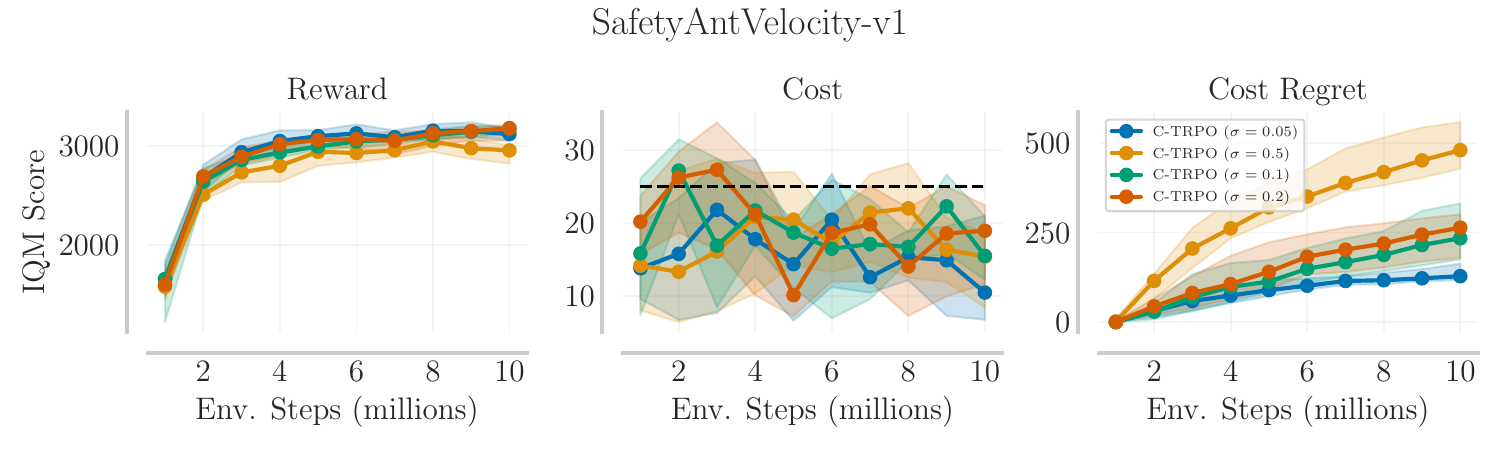}
    \caption{Performance of C-TRPO on the \emph{AntVelocity} task as a function of cost value noise. The final reward decreases slightly for noise with $\sigma=0.5$ and the cost regret increases with the amount of noise added to the value estimates.}
    \label{fig:cost-noise}
\end{figure}

\clearpage

\subsection{Performance on individual environments}\label{appendix:additional-experiments-envs}
Here, we compare C-TRPO to relevant baseline algorithms on all individual environments in terms of their sample efficiency curves.
To improve readability of the plots, only the algorithms that are, on average, safe in the last iterate are included.

\begin{figure}[ht]
    \centering
    \includegraphics[width=0.46\linewidth]{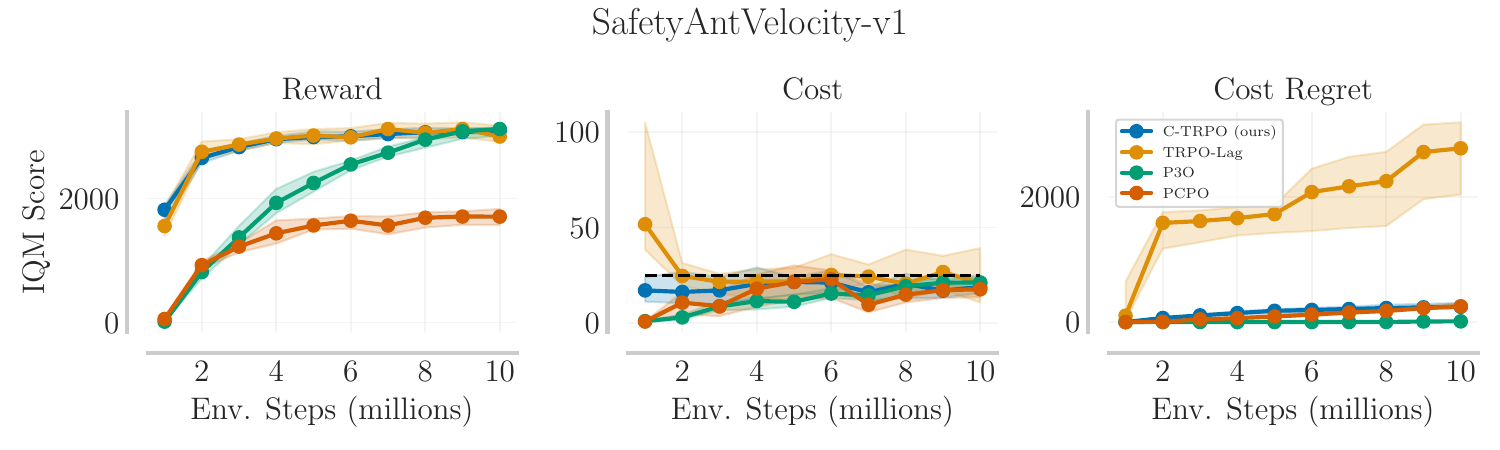}\hfill
    \includegraphics[width=0.46\linewidth]{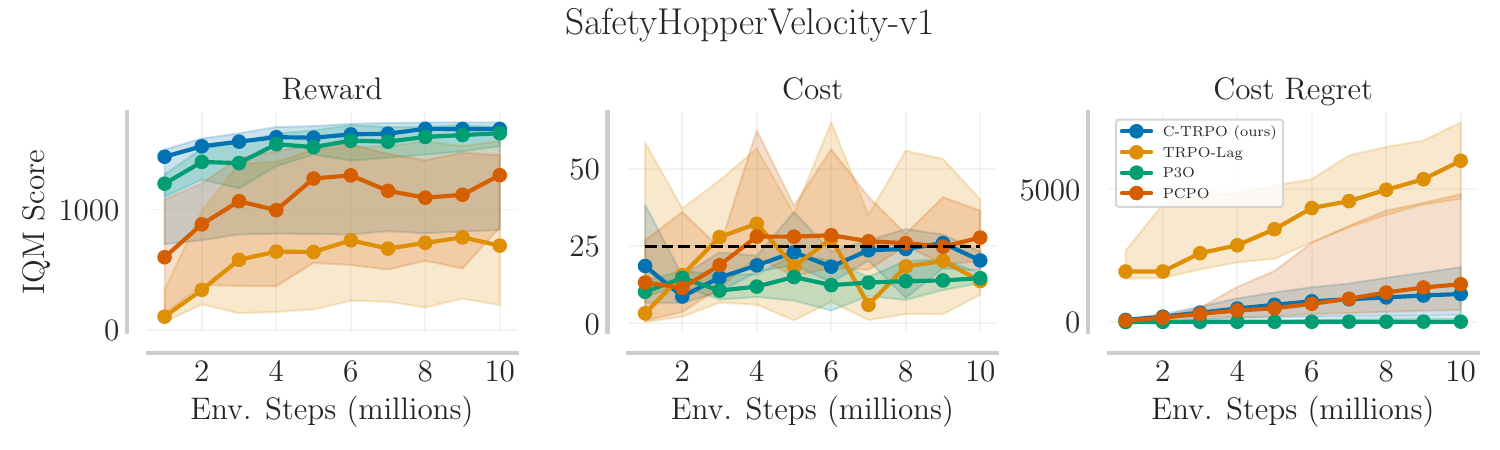}
    
    \vspace{0.5cm}
    
    \includegraphics[width=0.46\linewidth]{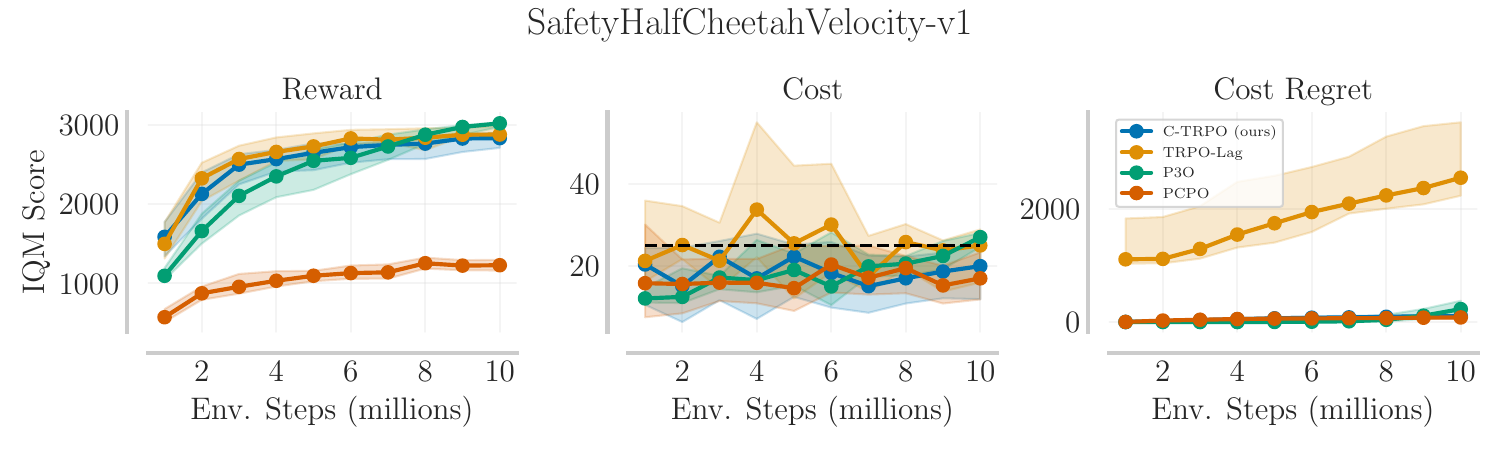}\hfill
    \includegraphics[width=0.46\linewidth]{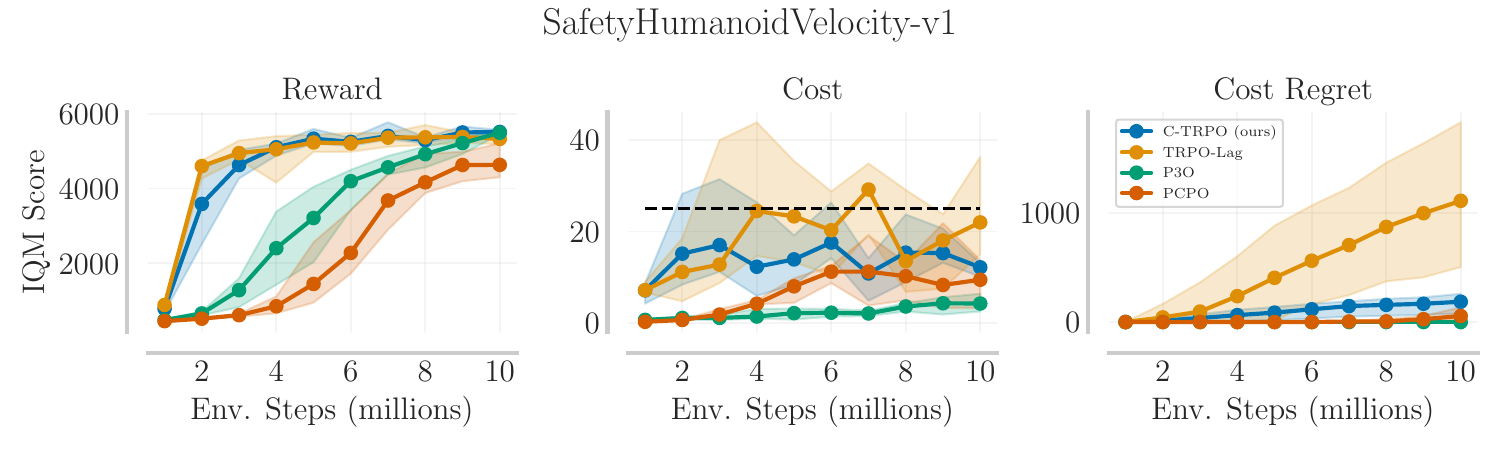}
    \caption{Benchmark on the locomotion environments.}
     \label{fig:additional-experiments-envs-loco}
\end{figure}

\begin{figure}[ht]
    \centering
    \includegraphics[width=0.46\linewidth]
    {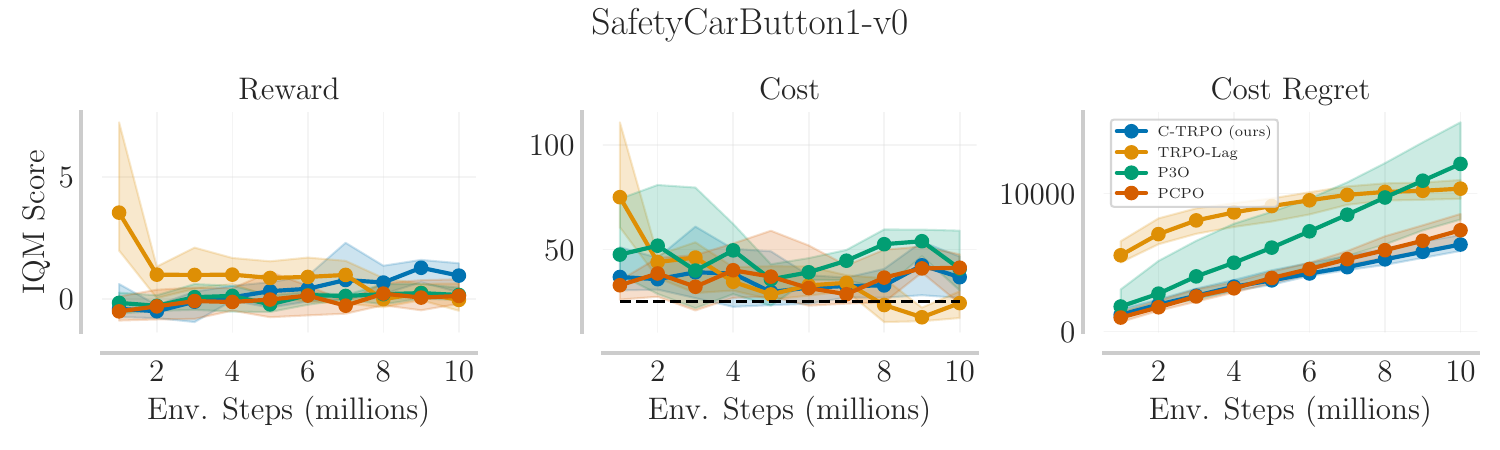}\hfill
    \includegraphics[width=0.46\linewidth]{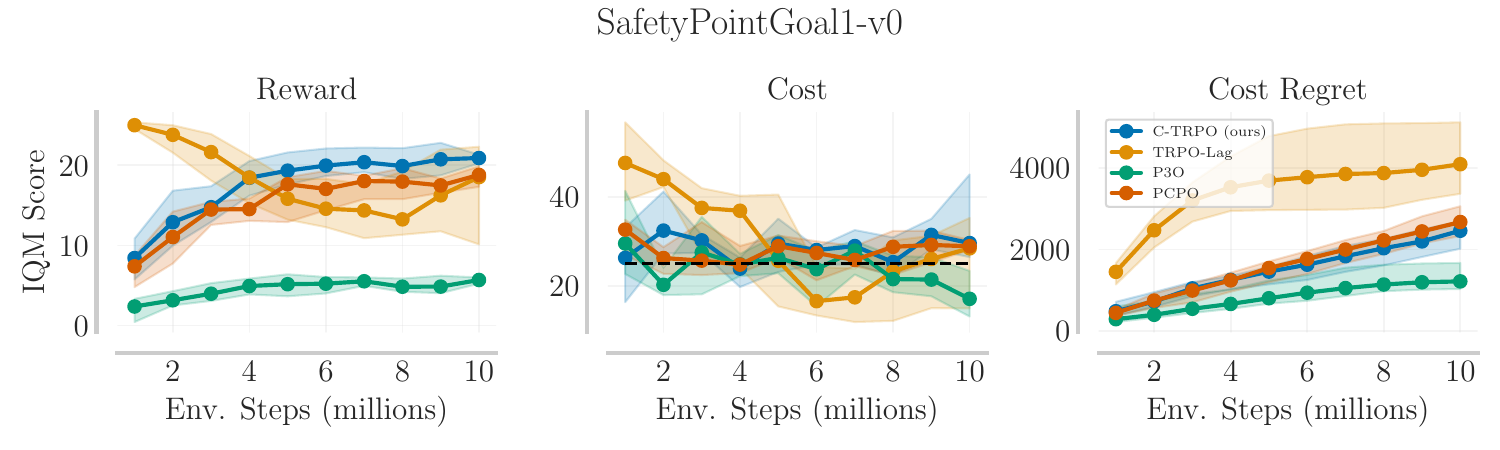}
        
    \vspace{0.5cm}
    
    \includegraphics[width=0.46\linewidth]{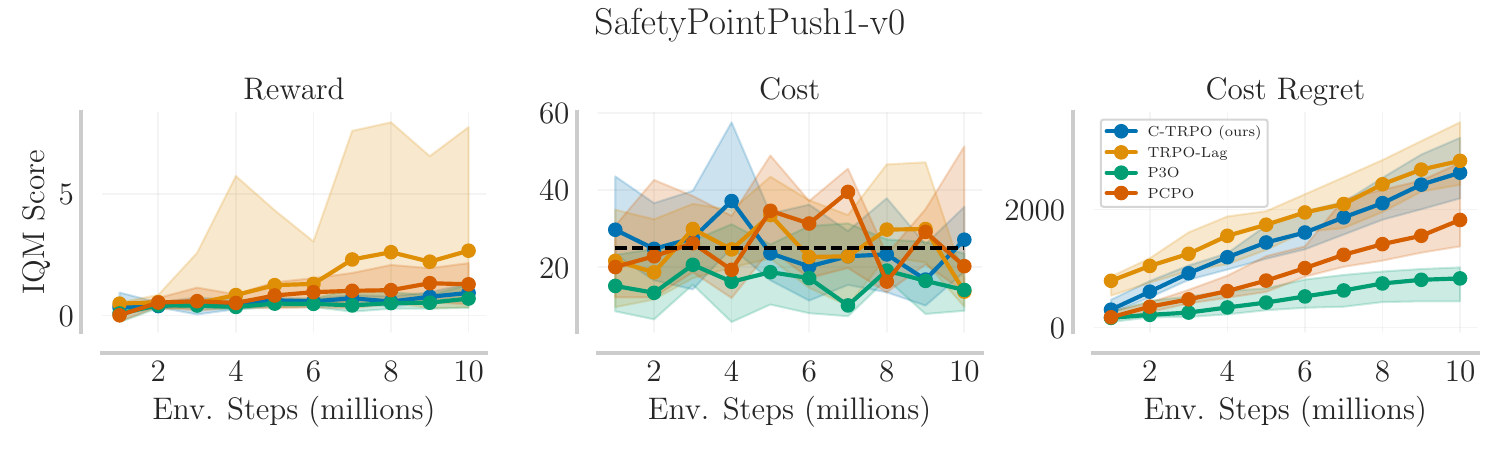}\hfill
    \includegraphics[width=0.46\linewidth]{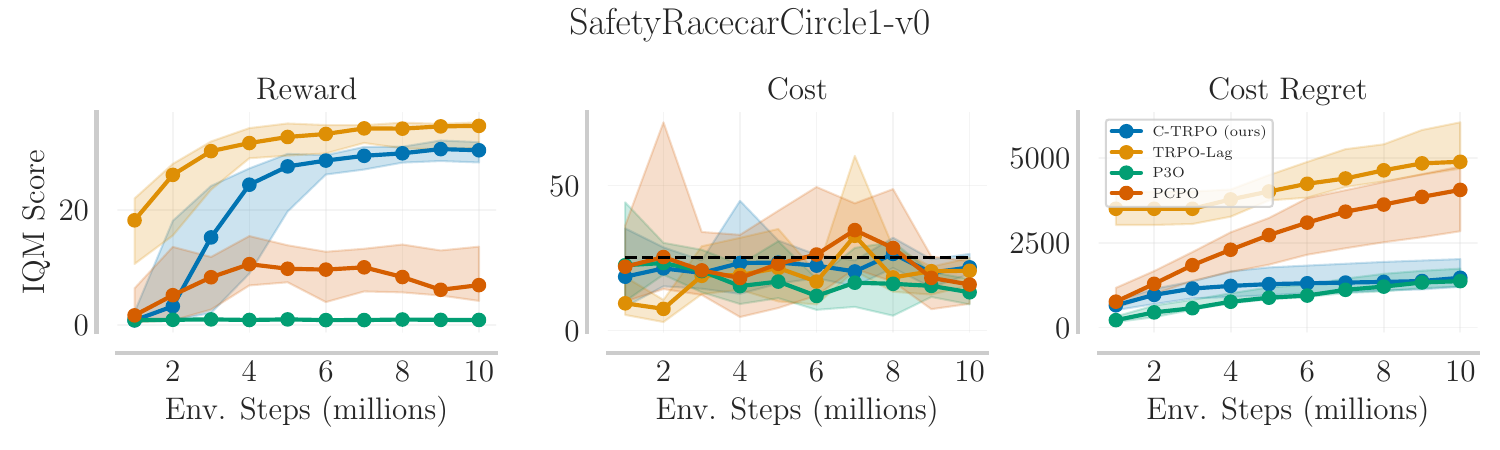}
    \caption{Benchmark on the navigation environments.}
    \label{fig:additional-experiments-envs-navi}
\end{figure}


\clearpage

\begin{table}[ht]
 \caption{Final performance per task for 5 seeds each, where expected reward $V_r$, expected cost $V_c$, and cost regret Reg$_+$ are shown, including lower and upper confidence intervals.
 We highlight the best performance with respect to the IQM of the return $V_r$ in bold, and box the lowest cost regret Reg$_+$.
 }
 \label{table:additional-experiments-envs-eval}
 \centering
\begin{adjustbox}{angle=-90}
\tiny
\setlength\tabcolsep{2pt}
\centering
\bgroup
\def\arraystretch{1.5}%
\begin{tabular}{rrrrrrrrrr}
\toprule
 &  & AntVelocity & HalfCheetahVelocity & HumanoidVelocity & HopperVelocity & CarButton & PointGoal & RacecarCircle & PointPush \\
\midrule
\multirow[t]{3}{*}{C-TRPO (OURS)} & $V_r$ & {3099.0 ± -38.5/46.5} & {2833.8 ± -123.6/70.9} & {5513.1 ± -66.5/53.8} & {\bfseries1669.2 ± -839.8/55.2} & {1.0 ± -0.7/0.5} & {20.9 ± -0.7/0.4} & {30.3 ± -2.1/1.4} & {0.9 ± -0.3/0.4} \\
 & $V_c$ & 18.8 ± -3.2/6.5 & 20.0 ± -8.1/5.7 & 12.2 ± -2.1/0.9 & 20.4 ± -3.2/2.9 & 36.8 ± -9.9/9.9 & 29.6 ± -3.2/15.4 & 21.7 ± -2.3/4.7 & 27.1 ± -8.3/8.5 \\
 & Reg$_+$ & {244.0 ± -24.6/60.5} & {105.3 ± -30.6/25.2} & {185.9 ± -109.0/72.8} & {1056.4 ± -804.5/1014.9} & {6299.9 ± -465.5/686.1} & {2459.6 ± -442.6/230.9} & {1465.8 ± -265.7/552.4} & {2626.2 ± -436.0/591.3} \\
\cline{1-10}
\multirow[t]{3}{*}{TRPO-LAG} & $V_r$ & {3001.9 ± -86.5/177.8} & {2879.0 ± -64.6/104.9} & {5326.9 ± -137.0/173.1} & {700.8 ± -493.5/868.2} & {\bfseries-0.0 ± -0.5/0.6} & {18.5 ± -8.4/3.8} & {\bfseries34.6 ± -3.9/0.6} & {\bfseries2.7 ± -2.4/5.0} \\
 & $V_c$ & 20.8 ± -10.0/18.4 & 25.0 ± -9.1/3.9 & 22.0 ± -7.0/14.3 & 13.7 ± -4.5/26.5 & 24.4 ± -7.1/5.7 & 28.4 ± -13.5/6.8 & 20.6 ± -5.0/4.4 & 13.7 ± -3.9/3.2 \\
 & Reg$_+$ & {2773.5 ± -738.5/414.3} & {2546.5 ± -318.1/977.8} & {1110.4 ± -607.6/719.7} & {6075.9 ± -1439.3/1447.5} & {10343.1 ± -734.2/624.4} & {4094.7 ± -727.7/1029.6} & {4889.9 ± -202.4/1164.8} & {2828.3 ± -408.5/653.6} \\
\cline{1-10}
\multirow[t]{3}{*}{CPPO-PID} & $V_r$ & {3205.3 ± -186.5/76.7} & {3036.1 ± -36.7/10.7} & {5877.3 ± -111.4/84.8} & {1657.5 ± -65.5/61.0} & {-1.2 ± -0.5/0.6} & {6.1 ± -3.0/4.8} & {8.1 ± -5.5/4.3} & {1.0 ± -0.6/1.1} \\
 & $V_c$ & 26.2 ± -5.3/4.4 & 26.5 ± -2.7/7.2 & 20.3 ± -8.6/6.0 & 18.6 ± -9.0/8.1 & 23.8 ± -8.4/6.0 & 21.8 ± -4.4/6.8 & 33.3 ± -6.5/5.9 & 22.8 ± -11.1/9.9 \\
 & Reg$_+$ & {1416.8 ± -201.6/328.3} & {2094.1 ± -351.0/417.9} & {913.9 ± -228.4/304.9} & {3649.6 ± -1018.2/695.1} & {\fbox{3256.0 ± -488.9/211.4}} & {2233.6 ± -274.0/681.0} & {2573.5 ± -317.7/897.3} & {1981.3 ± -237.1/293.2} \\
\cline{1-10}
\multirow[t]{3}{*}{P3O} & $V_r$ & {\bfseries3122.5 ± -111.4/24.6} & {3020.3 ± -44.8/12.8} & {5492.1 ± -45.0/118.7} & {1633.5 ± -107.7/49.0} & {0.2 ± -0.2/0.3} & {5.7 ± -0.5/0.3} & {0.9 ± -0.1/0.1} & {0.7 ± -0.4/0.6} \\
 & $V_c$ & 21.2 ± -2.2/2.5 & 27.0 ± -2.4/1.1 & 4.2 ± -1.7/2.2 & 14.6 ± -1.7/1.6 & 40.9 ± -10.4/18.2 & 17.1 ± -4.0/6.2 & 13.1 ± -4.1/4.6 & 14.1 ± -5.4/9.4 \\
 & Reg$_+$ & {\fbox{11.0 ± -5.0/8.1}} & {228.1 ± -128.5/146.6} & {\fbox{0.0 ± 0.0/0.0}} & {\fbox{13.6 ± -13.5/131.7}} & {12133.6 ± -4026.3/3007.5} & {\fbox{1214.8 ± -187.5/452.0}} & {\fbox{1367.9 ± -159.8/368.4}} & {\fbox{832.1 ± -388.7/186.9}} \\
\cline{1-10}
\multirow[t]{3}{*}{PCPO} & $V_r$ & {1709.8 ± -135.8/124.9} & {1228.3 ± -67.0/67.7} & {4632.0 ± -331.6/574.4} & {1285.3 ± -432.2/166.1} & {0.1 ± -0.3/0.7} & {18.8 ± -1.5/1.1} & {6.9 ± -2.7/6.7} & {1.3 ± -0.9/0.9} \\
 & $V_c$ & 17.7 ± -4.1/1.2 & 17.0 ± -5.2/6.3 & 9.4 ± -6.1/4.3 & 27.8 ± -7.7/8.7 & 41.4 ± -17.2/6.4 & 28.9 ± -0.8/1.4 & 15.8 ± -6.6/8.0 & 20.3 ± -6.6/31.0 \\
 & Reg$_+$ & {250.8 ± -16.8/52.9} & {\fbox{81.2 ± -20.1/31.1}} & {55.7 ± -34.7/79.4} & {1423.4 ± -969.9/3396.7} & {7334.0 ± -1091.2/1176.7} & {2673.7 ± -343.7/387.2} & {4059.8 ± -1219.8/690.6} & {1824.5 ± -449.2/968.5} \\
\cline{1-10}
\multirow[t]{3}{*}{CPO} & $V_r$ & {3106.7 ± -92.4/21.5} & {2824.1 ± -97.7/104.2} & {5569.6 ± -248.7/349.3} & {1696.4 ± -16.5/19.4} & {1.1 ± -0.6/0.2} & {20.4 ± -0.8/2.0} & {29.8 ± -1.1/1.9} & {0.7 ± -0.3/2.9} \\
 & $V_c$ & 25.1 ± -17.5/11.3 & 23.1 ± -16.8/8.0 & 16.2 ± -3.7/8.6 & 25.7 ± -8.1/4.4 & 33.5 ± -10.6/8.7 & 28.2 ± -3.4/4.1 & 23.1 ± -9.9/4.5 & 28.9 ± -8.0/20.0 \\
 & Reg$_+$ & {1323.7 ± -195.1/284.9} & {1142.2 ± -319.2/1209.0} & {116.9 ± -73.5/249.9} & {3568.0 ± -735.9/1988.6} & {6067.1 ± -546.1/977.0} & {2532.2 ± -276.8/206.1} & {2247.0 ± -415.5/615.1} & {2701.4 ± -388.8/1840.8} \\
\cline{1-10}
\multirow[t]{3}{*}{CUP} & $V_r$ & {3092.0 ± -53.8/170.7} & {2916.7 ± -346.9/116.8} & {5677.9 ± -173.1/40.2} & {1639.8 ± -88.2/63.1} & {2.3 ± -1.8/3.9} & {22.4 ± -7.7/1.2} & {27.6 ± -3.2/2.3} & {0.5 ± -0.3/1.1} \\
 & $V_c$ & 25.1 ± -1.9/2.7 & 40.7 ± -27.7/34.1 & 24.0 ± -7.0/10.5 & 23.9 ± -8.6/14.7 & 71.5 ± -28.2/57.0 & 45.2 ± -5.4/4.1 & 21.6 ± -5.9/6.3 & 35.6 ± -11.0/14.8 \\
 & Reg$_+$ & {3160.9 ± -377.1/641.8} & {4719.9 ± -915.0/3503.3} & {509.0 ± -276.6/306.5} & {4849.7 ± -1176.6/984.8} & {23726.4 ± -10827.1/20476.7} & {11715.5 ± -2047.7/1636.0} & {4774.5 ± -484.2/2431.3} & {3209.9 ± -1255.9/1967.1} \\
\cline{1-10}
\multirow[t]{3}{*}{FOCOPS} & $V_r$ & {2942.2 ± -96.3/45.6} & {2997.8 ± -380.0/17.8} & {5420.0 ± -262.0/183.0} & {1670.6 ± -19.4/8.3} & {1.9 ± -0.7/1.1} & {17.6 ± -3.1/3.6} & {24.7 ± -4.8/4.1} & {0.7 ± -0.5/2.1} \\
 & $V_c$ & 28.1 ± -8.1/1.4 & 36.9 ± -7.5/3.2 & 10.9 ± -4.1/8.5 & 25.9 ± -2.4/5.5 & 33.0 ± -10.0/30.4 & 33.7 ± -10.8/20.4 & 21.4 ± -8.1/4.7 & 24.7 ± -8.4/4.8 \\
 & Reg$_+$ & {2257.8 ± -437.4/248.0} & {3472.8 ± -1021.9/664.2} & {872.7 ± -766.6/684.7} & {3647.8 ± -307.0/383.5} & {11359.2 ± -750.1/3436.6} & {5917.6 ± -1987.9/7276.3} & {6031.9 ± -1427.1/1075.8} & {2684.0 ± -451.8/484.2} \\
\cline{1-10}
\multirow[t]{3}{*}{PPO-LAG} & $V_r$ & {3210.7 ± -126.6/85.8} & {\bfseries3033.6 ± -27.6/1.5} & {5814.9 ± -102.9/122.0} & {240.1 ± -92.7/159.0} & {0.3 ± -1.0/0.8} & {\bfseries9.4 ± -1.3/1.8} & {30.9 ± -16.5/1.8} & {0.6 ± -0.2/0.0} \\
 & $V_c$ & 28.9 ± -8.6/8.7 & 23.2 ± -2.8/1.9 & 12.7 ± -7.6/31.0 & 38.8 ± -24.4/36.4 & 39.2 ± -12.7/41.1 & 22.5 ± -4.3/10.1 & 31.7 ± -9.2/2.7 & 18.2 ± -11.4/9.5 \\
 & Reg$_+$ & {1767.5 ± -224.5/194.1} & {3339.6 ± -486.9/512.1} & {1299.1 ± -320.5/455.9} & {5909.8 ± -3420.8/2790.5} & {22554.4 ± -6386.9/10174.9} & {5135.6 ± -729.3/1539.8} & {6322.2 ± -419.5/722.3} & {3464.7 ± -453.1/743.8} \\
\cline{1-10}
\multirow[t]{3}{*}{IPO} & $V_r$ & {2962.4 ± -39.2/31.6} & {2810.9 ± -143.1/124.8} & {\bfseries5886.8 ± -205.5/149.3} & {1535.2 ± -857.1/167.6} & {-0.5 ± -0.2/0.5} & {6.6 ± -3.3/3.0} & {1.5 ± -0.3/0.4} & {0.6 ± -0.2/0.3} \\
 & $V_c$ & 28.3 ± -6.1/4.8 & 27.4 ± -3.5/7.9 & 18.1 ± -6.9/8.2 & 24.9 ± -10.3/2.0 & 38.5 ± -6.7/3.9 & 25.6 ± -2.0/6.5 & 24.6 ± -4.2/9.9 & 23.7 ± -9.0/3.9 \\
 & Reg$_+$ & {1548.1 ± -469.7/459.5} & {2351.2 ± -1385.0/1098.6} & {580.3 ± -319.9/167.0} & {3958.3 ± -2464.1/2771.1} & {6570.8 ± -871.6/1370.8} & {3496.1 ± -717.2/1900.7} & {5926.5 ± -938.8/1361.4} & {2464.2 ± -711.4/1688.5} \\
\cline{1-10}
\bottomrule
\end{tabular}
\egroup
\end{adjustbox}
\end{table}

\end{document}